\title{LHT: Statistically-Driven Oblique Decision Trees for Interpretable Classification}
\author{%
  Hongyi Li \\
  Department of Automation\\
  Harbin Institute of Technology, Shenzhen\\
  \texttt{23b904015@stu.hit.edu.cn} \\
   \And
   Jun Xu \thanks{Corresponding author}\\
    Department of Automation\\
   Harbin Institute of Technology, Shenzhen\\
   \texttt{xujunqgy@hit.edu.cn} \\
   \AND
   William Ward Armstrong \\
   Department of Computing Science \\
   University of Alberta \\
   \texttt{warmstro@ualberta.ca} \\
}
\begin{document}

\maketitle

\newtheorem{lemma}{Lemma}
\newtheorem{remark}{Remark}
\newtheorem{theorem}{Theorem}
\newtheorem{assumption}{Assumption}

\begin{abstract}
We introduce the Learning Hyperplane Tree (LHT), a novel oblique decision tree model designed for expressive and interpretable classification. LHT fundamentally distinguishes itself through a \emph{non-iterative, statistically-driven} approach to constructing splitting hyperplanes.Unlike methods that rely on iterative optimization or heuristics, LHT directly computes the hyperplane parameters, which are derived from feature weights based on the differences in feature expectations between classes within each node. This deterministic mechanism enables a direct and well-defined hyperplane construction process. Predictions leverage a unique piecewise linear membership function within leaf nodes, obtained via local least-squares fitting. We formally analyze the convergence of the LHT splitting process, ensuring that each split yields meaningful, non-empty partitions. Furthermore, we establish that the time complexity for building an LHT up to depth $d$ is $O(mnd)$, demonstrating the practical feasibility of constructing trees with powerful oblique splits using this methodology. The explicit feature weighting at each split provides inherent interpretability. Experimental results on benchmark datasets demonstrate LHT's competitive accuracy, positioning it as a practical, theoretically grounded, and interpretable alternative in the landscape of tree-based models. The implementation of the proposed method is available at \url{https://github.com/Hongyi-Li-sz/LHT_model}.
\end{abstract}

\section{Introduction}

Tree-based models, particularly gradient-boosted variants like XGBoost \cite{chen2016XGBoost}, consistently achieve state-of-the-art (SOTA) performance on tabular data tasks, often surpassing deep learning methods in this domain \cite{shwartz-ziv2021tabular, grinsztajn2022tree}. However, the expressive power of traditional decision trees is fundamentally limited by their reliance on axis-parallel splits \cite{NEURIPS2021_285a25c1}, which are often inadequate for capturing the complex dependencies and interactions among features. Oblique decision trees address this limitation by employing hyperplane splits based on linear combinations of features, offering substantially greater flexibility and modeling capacity. Yet, many existing tree algorithms depend on iterative optimization procedures or heuristic search methods to find suitable hyperplanes  \cite{marton2024gradtree, zharmagambetov2021non}, which can present challenges in terms of their underlying mechanics or applicability.

To offer an alternative approach, we introduce the Learning Hyperplane Tree (LHT), a novel oblique decision tree model characterized by its unique hyperplane construction methodology. Instead of relying on iterative optimization, LHT employs a \emph{statistical, non-iterative} approach to directly construct splitting hyperplanes at each node. Specifically, LHT derives these hyperplanes by leveraging local data statistics, primarily exploiting the differences in feature expectations between target and non-target classes within the current node's data subset. This statistically-driven mechanism allows LHT to capture complex feature relationships relevant for splitting without resorting to iterative refinement processes. LHT then recursively partitions the data using these oblique hyperplanes, progressively creating a refined partitioning of the feature space. To our knowledge, this specific mechanism—using feature expectation differences to directly guide the construction of oblique hyperplanes without iteration—is unique to LHT, offering a novel paradigm for building oblique decision trees. Furthermore, at its leaf nodes, LHT utilizes a locally fitted, piecewise linear membership function for prediction. The explicit calculation of feature weights contributing to each split also endows the model with inherent interpretability. 

Our main contributions are:
1) We propose LHT, a novel oblique decision tree that constructs hyperplanes directly using a distinctive statistical, non-iterative approach \textcolor{black}{and possesses universal approximation capability}, setting it apart from conventional optimization-based methods.
2) We detail the LHT training procedure, highlighting its statistically-driven feature weighting and threshold selection strategy for hyperplane definition, and provide analysis regarding its convergence properties.
3) We discuss LHT's inherent interpretability, which stems from the explicit feature contributions calculated at each splitting node.
4) We conduct extensive experiments on benchmark datasets, demonstrating that LHT achieves competitive performance compared to SOTA tree-based models in terms of classification accuracy.

\section{Learning Hyperplane Tree}
\label{ss2}
\begin{figure*}[ht]
\vskip -0.1in
\begin{center}
\centerline{\includegraphics[width=0.8\textwidth]{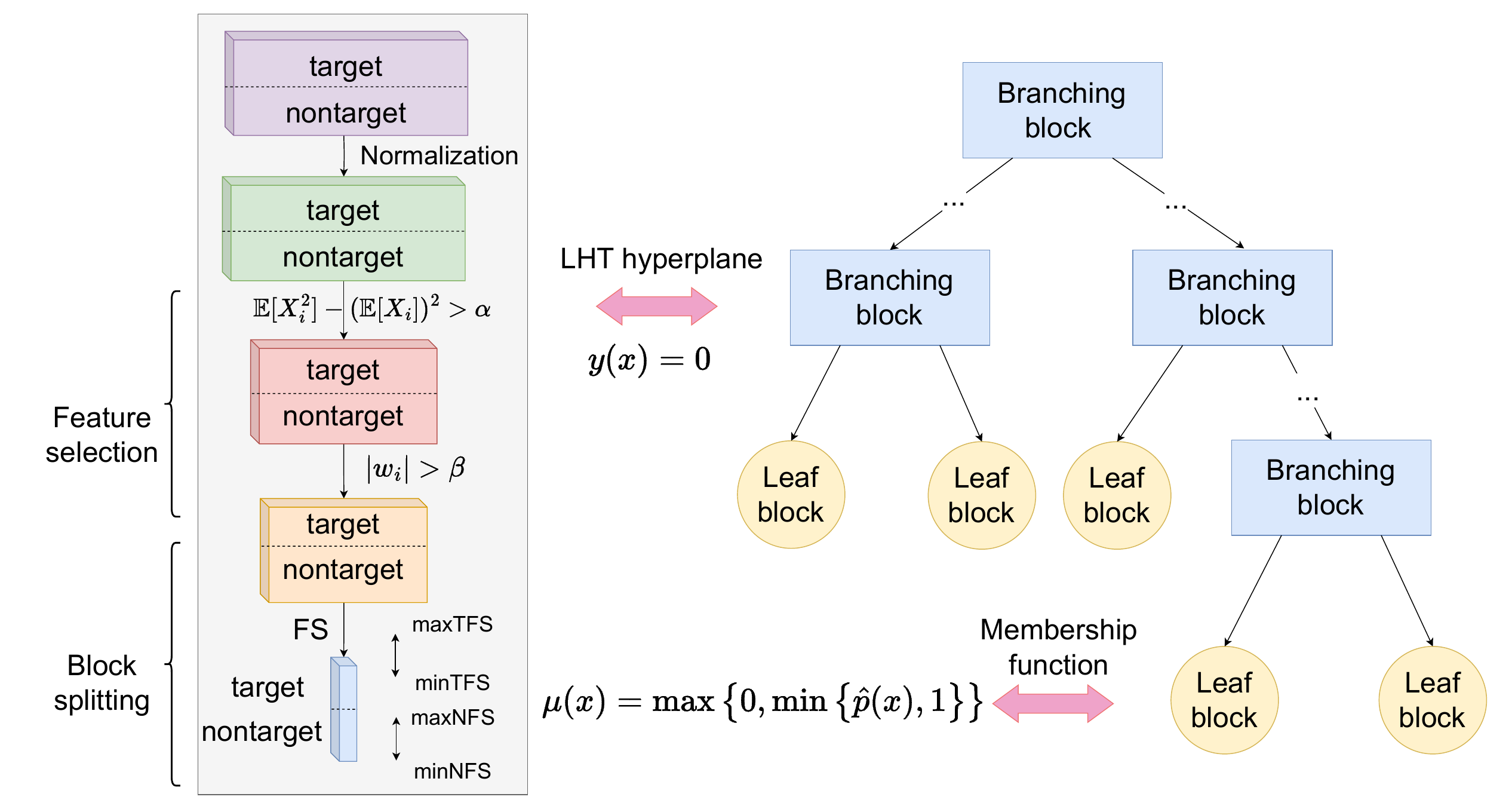}}
\caption{
The structure of LHT is illustrated. LHT consists of two types of blocks: branching blocks, which employ hyperplanes for sample partitioning, and leaf blocks, where least-squares fitted membership functions are used for classifying test samples. The construction of the LHT hyperplane consists of feature selection and block splitting.}

\label{LHT}
\end{center}
\vskip -0.2in
\end{figure*}

In this paper, each LHT is designed to solve a binary classification problem. For a multi-class classification task, an individual LHT is constructed for each class. For each class, the LHT treats the samples of that class as target samples and all others as non-target samples, which allows each LHT to tackle a binary classification problem specific to its assigned class. As a result, LHTs for different classes are tailored to handle distinct binary classification tasks. By combining the outputs of all LHTs, the overall multi-class classification problem can be effectively solved. 

As shown in Figure \ref{LHT}, an LHT begins with a root block and splits into a hierarchical structure. It consists of two types of blocks: branching blocks, which define hyperplanes to partition the sample data into subsets, and leaf blocks, which apply membership functions derived from local least squares fitting to assess a sample's membership in the target class.

\subsection{LHT Hyperplane}

The LHT hyperplanes tend to separate target samples from non-target samples using linear decision boundaries. Each branching block in the LHT has a hyperplane that partitions the data in the block into two subsets. These subsets are then assigned to the two subblocks resulting from the split. 
Typically, both sides of the split contain a mixture of both classes. However, the hyperplane can be chosen such that one side contains only target samples or only non-target samples.
The subblock that contains pure samples \textcolor{black}{or too few samples} is labeled as a leaf block and will not be split further. The subblock with mixed samples will continue to be split.
The data partitioning of a branching block consists of two steps: feature selection and block splitting, shown in Figure \ref{LHT}. \textcolor{black}{Section \ref{feature1} provides the details of feature selection, while Section \ref{block2} describes the branching block splitting process in detail.}

\subsubsection{Feature Selection}
\label{feature1}

Consider a branching block with $n$ samples, each described by $m$ features. Let $\mathbb{E}[X_i]$ denote the sample mean of the $i$-th feature across all $n$ samples. Features are selected by retaining those with high variance, satisfying the condition:
\begin{equation}
\mathbb{E}[X_i^2] - (\mathbb{E}[X_i])^2 > \alpha, \quad i = 1, 2, \dots, m,
\end{equation}
where $\mathbb{E}[X_i^2]$ is the second moment of the $i$-th feature, and $\alpha \geq 0$ is a threshold controlling the retention of informative features.

Within the block, define $\mathbb{E}[X_i^{\text{t}}]$ as the sample mean of the $i$-th feature for target samples and $\mathbb{E}[X_i^{\text{nt}}]$ for non-target samples. The separation degree $\text{SD}_i$ is given by:
\begin{equation}
\text{SD}_i = \mathbb{E}[X_i^{\text{t}}] - \mathbb{E}[X_i^{\text{nt}}], \quad i = 1, 2, \dots, m,
\label{SD}
\end{equation}
representing the mean difference between classes for feature $i$. A larger $|\text{SD}_i|$ indicates greater discriminative power. The maximum separation across features is:
\begin{equation}
\overline{\text{SD}} = \max \{ |\text{SD}_1|, |\text{SD}_2|, \dots, |\text{SD}_m| \},
\label{absSD}
\end{equation}
and the normalized weight of feature $i$ is:
\begin{equation}
w_i = \frac{\text{SD}_i}{\overline{\text{SD}}}, \quad i = 1, 2, \dots, m,
\label{feature_weight}
\end{equation}
where $-1 \leq w_i \leq 1$. Features with $|w_i| \geq \beta$, for $0 \leq \beta \leq 1$, are selected, with larger $\beta$ values yielding fewer but more discriminative features, and smaller $\beta$ values retaining more features.

\subsubsection{Block Splitting}
\label{block2}

For an input vector $\boldsymbol{x} \in \mathbb{R}^m$, the feature-weighted sum is:
\begin{equation}
\text{FS}(\boldsymbol{x}) = \sum_{i=1}^m w_i x_i,
\label{feature_sum}
\end{equation}
and the hyperplane is defined as:
\begin{equation}
y(\boldsymbol{x}) = \text{FS}(\boldsymbol{x}) - c = 0,
\label{hyperplane}
\end{equation}
where $c$ is a constant selected to optimize the split. Samples with $y(\boldsymbol{x}) < 0$ are assigned to the left subblock, and those with $y(\boldsymbol{x}) \geq 0$ to the right subblock.

For the $n$ samples in the block, denoted $\boldsymbol{x}_j \in \mathbb{R}^m$, compute $\text{FS}(\boldsymbol{x}_j) = \sum_{i=1}^m w_i x_{ij}$, forming the set $\mathcal{FS} = \{\text{FS}(\boldsymbol{x}_1), \dots, \text{FS}(\boldsymbol{x}_n)\}$, with subsets $\mathcal{FS}^\text{t}$ and $\mathcal{FS}^\text{nt}$ for target and non-target samples. Note that $\mathcal{FS}=\mathcal{FS}^\text{t}\cup\mathcal{FS}^{\text{nt}}$. 
Define:
\begin{align*}
\min \text{TFS} &= \min \{\mathcal{FS}^\text{t}\}, & \max \text{TFS} &= \max \{\mathcal{FS}^\text{t}\}, \\
\min \text{NFS} &= \min \{\mathcal{FS}^\text{nt}\}, & \max \text{NFS} &= \max \{\mathcal{FS}^\text{nt}\}.
\end{align*}
The central value $e = \frac{\min \text{NFS} + \max \text{NFS} + \min \text{TFS} + \max \text{TFS}}{4}$ serves as a fallback. Candidates for $c$ are $\min \text{NFS}$, $\max \text{NFS} + \delta_1$, $\min \text{TFS}$, $\max \text{TFS} + \delta_1$, and $e$, where $\delta_1 > 0$ is an infinitesimally small positive number used to ensure the purity of the subblock. A split where $c \neq e$ ensures that at least one of the resulting subblocks is a pure leaf block.  Further details are provided in Appendix~\ref{case}.

The selection of $c$ maximizes the number of pure samples in one subblock:
\begin{align*}
N_1 &= \left| \{ j \mid \text{FS}(\boldsymbol{x}_j) \in \mathcal{FS}^\text{t}, \text{FS}(\boldsymbol{x}_j) < \min \text{NFS} \} \right|, \\
N_2 &= \left| \{ j \mid \text{FS}(\boldsymbol{x}_j) \in \mathcal{FS}^\text{t}, \text{FS}(\boldsymbol{x}_j) > \max \text{NFS} \} \right|, \\
N_3 &= \left| \{ j \mid \text{FS}(\boldsymbol{x}_j) \in \mathcal{FS}^\text{nt}, \text{FS}(\boldsymbol{x}_j) < \min \text{TFS} \} \right|, \\
N_4 &= \left| \{ j \mid \text{FS}(\boldsymbol{x}_j) \in \mathcal{FS}^\text{nt}, \text{FS}(\boldsymbol{x}_j) > \max \text{TFS} \} \right|,
\end{align*}
with $N_{\max} = \max\{N_1, N_2, N_3, N_4\}$. $|\cdot|$ denotes the number of elements in a set. Then:
\begin{equation}
c = \begin{cases}
\min \text{NFS}, & \text{if } N_1 = N_{\max} \text{ and } N_{\max} \geq \gamma, \\
\max \text{NFS} + \delta_1, & \text{if } N_2 = N_{\max} \text{ and } N_{\max} \geq \gamma, \\
\min \text{TFS}, & \text{if } N_3 = N_{\max} \text{ and } N_{\max} \geq \gamma, \\
\max \text{TFS} + \delta_1, & \text{if } N_4 = N_{\max} \text{ and } N_{\max} \geq \gamma, \\
e, & \text{if } N_{\max} < \gamma,
\end{cases}
\label{c}
\end{equation}
where $\gamma > 0$ serves as the minimum sample threshold for generating pure leaf blocks. Its primary function is to prevent the creation of leaf blocks with too few samples, thereby ensuring that each leaf block resulting from a split contains a sufficient number of samples. If $N_{\max} < \gamma$, $c = e$ ensures both subblocks contain a certain number of samples. 
The construction of LHT follows Algorithm~\ref{alg:lht_build_appendix} provided in Appendix~\ref{alg_app}.
Figure \ref{LHT_sample} presents an example of sample allocation when $c=\min \text{TFS}$.
Additional cases where $N_{\max} \geq \gamma$ can be found in Appendix~\ref{case}.
Branching block 0 contains both target class and non-target class samples. The feature-weighted sum for each sample is computed to obtain the feature set $\mathcal{FS}$, as well as four specific feature-weighted sums: $\min \text{TFS}$, $\max \text{TFS}$, $\min \text{NFS}$, and $\max \text{NFS}$. 
\textcolor{black}{
Next, under the condition that $N_{\max}\geq\gamma$, the value of $c$ is selected from these four candidates to maximize the number of pure samples in the resulting leaf blocks. Then, check the value of $y(\boldsymbol{x})$, assigning samples with $y(\boldsymbol{x})<0$ to the left subblock and the remaining samples to the right subblock. If a subblock contains \textcolor{black}{too few samples} or samples from only one class, it is marked as a leaf block.
It is important to note that since the samples in block 2 are the remaining samples of block 0 after excluding those in block 1, $\mathbb{E}[X_i]^{\text{t}}$, $\mathbb{E}[X_i]^{\text{nt}}$, $\text{SD}_i$ and $w_i$ in block 2 change.} Consequently, the feature-weighted sum for the same sample may vary on different blocks. 
\begin{figure}[ht]
\vskip -0.1in
\begin{center}
\centerline{\includegraphics[width=0.45\textwidth]{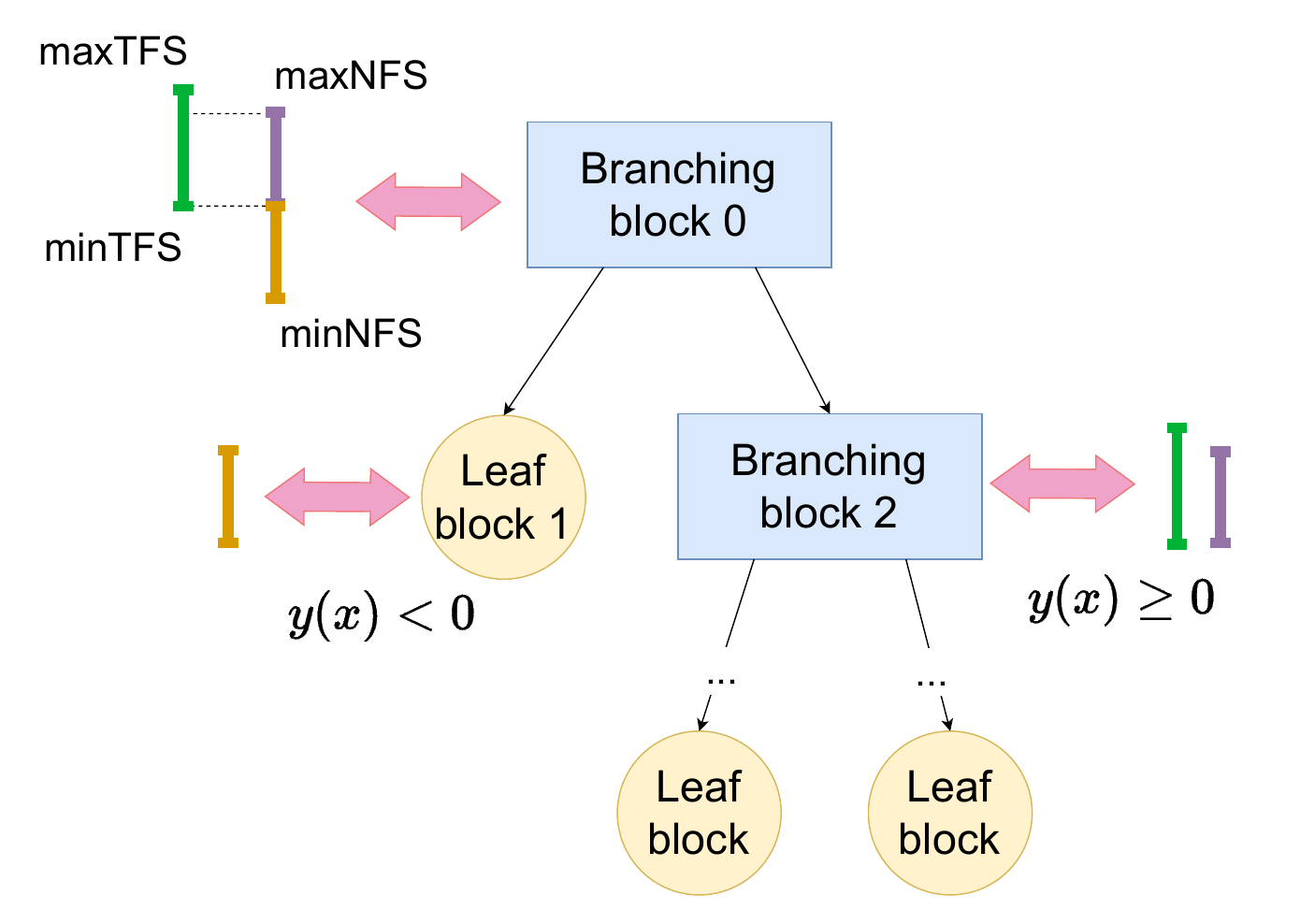}}
\caption{The case where $c=\min \text{TFS}$ is illustrated when $N_3 = N_{\max}$ and $N_{max}\geq\gamma$, where $c = \min \text{TFS}$. Samples with a feature-weighted sum smaller than $c$ are assigned to the left subblock 1, and the remaining samples are assigned to the right subblock 2. Since all samples in left subblock 1 are non-target class samples, it is marked as a leaf block. Right subblock 2 still contains mixed samples, and the allocation process continues based on the data within the block until all samples are properly classified.}
\label{LHT_sample}
\end{center}
\vskip -0.2in
\end{figure}

\subsection{Analysis of the Splitting Process} 
Having detailed the splitting mechanism based on the calculated threshold \(c\), we now establish a key property ensuring the process is well-defined. Specifically, we show that under standard conditions, the split always results in non-empty subblocks, guaranteeing the progression of the tree construction. To formalize this, we introduce the following assumptions:

\begin{assumption} \label{ass:non_trivial_block}
The block to be split contains at least two samples (\(n \geq 2\)). Furthermore, the block contains samples from both the target and non-target classes.
\end{assumption}

\begin{assumption} \label{ass:splittable}
The feature-weighted sums \(\text{FS}(\boldsymbol{x}_j)\) are not all identical within the block, ensuring that a split is feasible.
\end{assumption}

\begin{assumption} \label{ass:threshold_gamma}
The threshold \(\gamma\) used in the splitting criterion (defined in the selection process for \(c\)) is a positive integer satisfying \(1 \leq \gamma \leq n\).
\end{assumption}

Assumption \ref{ass:non_trivial_block} states that we only consider splitting non-pure blocks with sufficient data. Assumption \ref{ass:splittable} ensures the feature-weighted sums provide a basis for separation. Assumption \ref{ass:threshold_gamma} is a basic requirement for the hyperparameter \(\gamma\). Under these mild conditions, the proposed splitting procedure is guaranteed to be effective:

\begin{theorem}[Splitting Guarantee] \label{thm:convergence}
Under Assumptions \ref{ass:non_trivial_block}, \ref{ass:splittable}, and \ref{ass:threshold_gamma}, the block splitting process using the threshold \(c\) (selected as described in Section \ref{block2}) always partitions the block into two strictly non-empty subblocks: \(B_1 = \{\boldsymbol{x}_j \mid \text{FS}(\boldsymbol{x}_j) < c\}\) and \(B_2 = \{\boldsymbol{x}_j \mid \text{FS}(\boldsymbol{x}_j) \geq c\}\).
\end{theorem}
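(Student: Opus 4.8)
The plan is to do a case analysis on the value of $c$ selected by the rule in Equation~\eqref{c}, showing in each case that both $B_1$ and $B_2$ are non-empty. The two regimes are $N_{\max} \geq \gamma$ (so $c$ is one of the four boundary candidates) and $N_{\max} < \gamma$ (so $c = e$, the central fallback value). Throughout, Assumption~\ref{ass:non_trivial_block} gives us at least one target and one non-target sample, Assumption~\ref{ass:splittable} gives us at least two distinct values in $\mathcal{FS}$, and Assumption~\ref{ass:threshold_gamma} guarantees $\gamma \geq 1$, which is what makes the first regime tractable.

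First I would handle the four boundary cases. Consider $c = \min\text{NFS}$, chosen when $N_1 = N_{\max} \geq \gamma \geq 1$. Non-emptiness of $B_1$: since $N_1 \geq 1$, there exists a target sample with $\text{FS}(\boldsymbol{x}_j) < \min\text{NFS} = c$, so $B_1 \neq \emptyset$. Non-emptiness of $B_2$: the non-target sample achieving $\min\text{NFS}$ satisfies $\text{FS}(\boldsymbol{x}_j) = c \geq c$, so it lies in $B_2$; hence $B_2 \neq \emptyset$. The case $c = \min\text{TFS}$ is symmetric, swapping the roles of target and non-target. For $c = \max\text{NFS} + \delta_1$ (chosen when $N_2 = N_{\max} \geq 1$): $B_2$ is non-empty because some target sample has $\text{FS}(\boldsymbol{x}_j) > \max\text{NFS}$, and since $\delta_1$ is infinitesimally small we can take it small enough that this sample still satisfies $\text{FS}(\boldsymbol{x}_j) \geq \max\text{NFS} + \delta_1 = c$; $B_1$ is non-empty because the non-target sample achieving $\max\text{NFS}$ satisfies $\text{FS}(\boldsymbol{x}_j) = \max\text{NFS} < \max\text{NFS} + \delta_1 = c$. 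The case $c = \max\text{TFS} + \delta_1$ is again symmetric. I would state the $\delta_1$ argument carefully once and reuse it.

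Next I would handle $c = e$, where $e = \tfrac{1}{4}(\min\text{NFS} + \max\text{NFS} + \min\text{TFS} + \max\text{TFS})$. Write $a = \min\{\min\text{TFS}, \min\text{NFS}\}$ and $b = \max\{\max\text{TFS}, \max\text{NFS}\}$ for the global min and max of $\mathcal{FS}$. By Assumption~\ref{ass:splittable}, $a < b$. One checks that $e$ is an average of four values each lying in $[a,b]$, with at least one equal to $a$ and at least one equal to $b$, and not all four equal (since $a<b$); hence $a < e < b$ — strictly, because a convex combination with positive weights of numbers in $[a,b]$ that are not all equal lies strictly between $\min$ and $\max$ of those numbers, and here that min is $a$ and that max is $b$. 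Then the sample attaining $a$ has $\text{FS} = a < e = c$, so it is in $B_1$; the sample attaining $b$ has $\text{FS} = b > e = c$, hence $\geq c$, so it is in $B_2$. Thus both subblocks are non-empty.

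The main obstacle, and the only place requiring genuine care rather than bookkeeping, is the treatment of the infinitesimal $\delta_1$ in the $c = \max\text{NFS}+\delta_1$ and $c = \max\text{TFS}+\delta_1$ cases: one must argue that $\delta_1$ can be (or is) chosen small enough that the strict inequality $\text{FS}(\boldsymbol{x}_j) > \max\text{NFS}$ for the witnessing sample is preserved as $\text{FS}(\boldsymbol{x}_j) \geq \max\text{NFS} + \delta_1$, i.e. $\delta_1$ must not exceed the gap between $\max\text{NFS}$ and the smallest $\mathcal{FS}$ value strictly above it. Since $N_2 \geq 1$ guarantees such a value exists, this gap is a well-defined positive number, so a valid $\delta_1$ exists; I would make this explicit to keep the ``infinitesimally small'' phrasing rigorous. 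Everything else is a direct consequence of the definitions of $N_1,\dots,N_4$ together with the existence of at least one sample of each class.
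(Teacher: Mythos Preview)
Your proposal is correct and follows essentially the same approach as the paper's proof: a case analysis on the five possible values of $c$, using $N_i \geq \gamma \geq 1$ to produce a witness in one subblock and the existence of a sample of the other class for the other subblock in the boundary cases, and the strict inequality $\text{FS}_{\min} < e < \text{FS}_{\max}$ for the fallback case. Your treatment of the $\delta_1$ gap is in fact slightly more explicit than the paper's, which simply appeals to $\delta_1$ being ``infinitesimally small''; otherwise the arguments coincide.
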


The proof is provided in Appendix~\ref{sec:appendix_proof}. 
This result confirms that the LHT splitting mechanism reliably partitions the data at each step, preventing the creation of empty nodes. \textcolor{black}{This non-empty partitioning is crucial as it guarantees the tree construction process \emph{terminates in a finite number of steps}}, ensuring the tree can be fully constructed.

\textbf{Computational Complexity.} Beyond ensuring valid splits, we analyze the computational cost of constructing the LHT. The following theorem characterizes the time complexity:

\begin{theorem}[Time Complexity] \label{thm:complexity}
Given a dataset with \(n\) samples and \(m\) features, the time complexity required to build an LHT up to a maximum depth \(d\) is \(O(mnd)\).
\end{theorem}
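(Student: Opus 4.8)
The plan is to bound the construction cost one depth level at a time, exploiting the structural fact that the blocks appearing at any fixed depth $\ell$ are pairwise disjoint subsets of the $n$ training samples: a depth-$\ell$ block is the result of the chain of splits along a unique root-to-node path, and by Theorem~\ref{thm:convergence} every such split sends each sample to exactly one of two strictly non-empty parts, so no sample is duplicated across blocks of the same depth. Writing $n^{(\ell)}_k$ for the size of the $k$-th block at depth $\ell$, this gives $\sum_k n^{(\ell)}_k \le n$ for every $\ell$. Thus, once I establish that a single block of size $n_k$ can be processed in $O(mn_k)$ time, the total work at depth $\ell$ is $\sum_k O(mn^{(\ell)}_k) = O(mn)$, and summing over the $O(d)$ depth levels yields the claimed $O(mnd)$.

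The heart of the argument is the \emph{per-block} bound. For a block with $n_k$ samples and $m$ features I would walk through Sections~\ref{feature1}--\ref{block2} step by step and verify that each subroutine is linear in $n_k$ times at most $m$: computing the first and second moments $\mathbb{E}[X_i]$, $\mathbb{E}[X_i^2]$ and applying the variance filter costs $O(mn_k)$; partitioning the sample indices by label and forming the class means, the separation degrees $\text{SD}_i$ in \eqref{SD}, the normalizer $\overline{\text{SD}}$ in \eqref{absSD} and the weights $w_i$ in \eqref{feature_weight} costs $O(mn_k)$; evaluating the feature-weighted sums $\text{FS}(\boldsymbol{x}_j)$ from \eqref{feature_sum} for all $j$ costs $O(mn_k)$; extracting $\min/\max$ of $\mathcal{FS}^{\text{t}}$ and $\mathcal{FS}^{\text{nt}}$, the fallback $e$, and the four counts $N_1,\dots,N_4$ costs $O(n_k)$ via a constant number of linear scans; choosing $c$ among the five candidates by the rule \eqref{c} costs $O(1)$; and partitioning the block into $B_1=\{\boldsymbol{x}_j: \text{FS}(\boldsymbol{x}_j)<c\}$ and $B_2=\{\boldsymbol{x}_j: \text{FS}(\boldsymbol{x}_j)\ge c\}$ costs $O(n_k)$. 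Summing these, a block is processed in $O(mn_k)$ time. I would also note that the terminal least-squares fit of the piecewise-linear membership function in a leaf of size $n_k$ reduces to accumulating a constant number of sufficient statistics over the one-dimensional feature-weighted-sum values, hence costs $O(mn_k)$; since leaves are disjoint, their total contribution is $O(mn)$, which is absorbed into $O(mnd)$.

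Then I would assemble the pieces: at each depth $\ell=0,1,\dots$ the work is $\sum_k O(mn^{(\ell)}_k)\le O(mn)$, and there are at most $d+1$ depth levels (splitting stops at depth $d$), so the build cost is $O(mn(d+1))=O(mnd)$, still dominating the $O(mn)$ leaf-fitting term. Hence the total time to build an LHT up to depth $d$ is $O(mnd)$.

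I expect the main obstacle to be the accounting for the threshold step — specifically, arguing that $c$ and the counts $N_1,\dots,N_4$ are computable in $O(n_k)$ \emph{without sorting} $\mathcal{FS}$, since a naive implementation that sorts would inject a spurious $\log n_k$ factor and downgrade the bound to $O(mnd\log n)$. The resolution is that each $N_i$ merely counts points lying on one side of an \emph{already computed} extreme value ($\min\text{NFS}$, $\max\text{NFS}$, $\min\text{TFS}$, or $\max\text{TFS}$), and the candidates for $c$ are those same extremes plus $e$, so one additional linear pass suffices. A secondary point worth making explicit is exactly what ``building an LHT up to depth $d$'' encompasses (the recursive branching-block splits together with the membership-function fits at the leaves) and confirming, as above, that the leaf fits leave the $O(mnd)$ order unchanged.
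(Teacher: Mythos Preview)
Your proposal is correct and follows essentially the same level-by-level aggregation strategy as the paper: bound the work at a single node by $O(m\cdot n_{\text{node}})$ via the same subroutine decomposition (feature statistics, weights, feature-weighted sums, threshold selection, partitioning), then use disjointness of nodes at each depth to get $O(mn)$ per level and $O(mnd)$ overall. Your explicit remarks that the threshold step avoids sorting and that the leaf membership-function fits contribute only an additional $O(mn)$ are both consistent with (and slightly more detailed than) the paper's treatment.
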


The detailed proof, which analyzes the cost of feature weighting, hyperplane determination, and data partitioning at each node and aggregates it across the tree levels, is provided in Appendix~\ref{sec:appendix_complexity_proof}. 
This complexity is comparable to standard axis-aligned decision tree algorithms (like CART when considering numeric features), demonstrating the practical efficiency of the LHT construction process despite using more expressive oblique splits.

\subsection{Membership Function}

\textcolor{black}{The membership function of an LHT is \textcolor{black}{a piecewise linear function derived via local least squares fitting within each leaf block.}} The classification task can be completed using the membership function of the leaf block corresponding to the input data.

\textcolor{black}{For a given class, the LHT assigns labels $p_i = 1$ to target samples and $p_i = 0$ to non-target samples, where $p_i \in \{0, 1\}$ for $i = 1, 2, \dots, n$. Within each leaf block, a linear function $\hat{p}(\boldsymbol{x})$ is fitted to these labels using least squares regression. To represent the degree of membership in [0,1], the membership function is:
\begin{equation}
\mu(\boldsymbol{x}) = \max\left\{0, \min\left\{\hat{p}(\boldsymbol{x}), 1\right\}\right\}.
\end{equation}
See Appendix~\ref{ap1} for derivation details. 
The procedure for traversing the tree to a leaf node and computing the membership value is outlined in Algorithm \ref{alg:lht_predict_appendix} (Appendix \ref{alg_app}). Since the membership function has a computational complexity of \(O(mn)\), it does not affect the overall complexity of constructing the LHT, which remains \(O(mnd)\). See Appendix~\ref{note} for details.} 

\subsection{ Universal Approximation Capability}

We prove that LHTs can universally approximate continuous functions on a compact set \( K \subset \mathbb{R}^m \).

\begin{theorem}
Let \( K \subset \mathbb{R}^m \) be a compact set and \( g: K \to [0,1] \) be a continuous function. Then, for any \( \epsilon > 0 \), there exists an LHT-defined function \( f_{\text{LHT}}: K \to [0,1] \) such that:
\[ \sup_{\mathbf{x} \in K} |f_{\text{LHT}}(\mathbf{x}) - g(\mathbf{x})| < \epsilon \]
\end{theorem}

See Appendix~\ref{sec:universal_approximation} for proof.

\subsection{LH Forest}
\label{LHf}

When the performance of a single LHT is limited, an LH Forest, comprising multiple LHTs, can be constructed to enhance feature extraction and improve classification accuracy. Each class is associated with several LHTs, and their outputs are aggregated to produce the final classification decision.

For datasets with many samples but few features, a random forest-inspired approach is adopted: multiple LHTs are trained on random subsets of the data to promote diversity and mitigate overfitting. Conversely, for datasets with fewer samples but high-dimensional features, distinct feature subsets are selected to train individual LHTs, capturing diverse data aspects.

The feature weights \( w_i \) in each LHT quantify the discriminative power of individual features, guiding effective feature selection. To build an LH Forest with \( t \) trees per class, a feature selection strategy adjusts the threshold \( \beta \) for the \( i \)-th tree (\( i = 0, 1, \dots, t-1 \)) as \( \beta_i = \beta' \cdot \frac{i}{t} \), where \( \beta' \in [0, 1] \) is a predefined constant controlling selection strictness. This approach ensures diversity across trees while prioritizing features with high discriminative ability.

\subsection{Interpretability Mechanisms}

The LHT separates target and non-target samples through recursive hyperplane splits, with leaf nodes employing piecewise linear membership functions derived from least squares fitting. This hierarchical structure and transparent design render LHT inherently interpretable.

Each branching block corresponds to a hyperplane defined by feature weights \( w_i \), where \( |w_i| \) measures the importance of the \( i \)-th feature in distinguishing between classes at that split. By traversing the tree’s decision paths, one can quantify each feature’s contribution at every branching block to the classification outcome. The tree’s transparency facilitates visualization and analysis of feature interactions across splits.
To illustrate this interpretability, we evaluate feature contributions in an LHT trained on the Wine dataset. Results are detailed in Appendix~\ref{ap2}.

\section{Experiments}
\label{exp}

\subsection{Comparison with Oblique Trees and CART}
\label{sec:comparison} 

In this section, we evaluate the performance of our proposed LHT model against several established benchmarks. These include SOTA oblique decision tree algorithms: Tree Alternating Optimization (TAO)~\cite{NEURIPS2018_185c29dc}, Dense Gradient Trees (DGT)~\cite{karthikeyanlearning}, DTSemNet~\cite{panda2024vanilla}, and the classic CART algorithm~\cite{breiman1984cart}. 

\begin{itemize}
    \item \textbf{TAO}~\cite{NEURIPS2018_185c29dc} employs alternating optimization at the tree-depth level to minimize misclassification errors, capable of handling various tree types and learning sparse oblique trees via sparsity penalties.
    \item \textbf{DGT}~\cite{karthikeyanlearning} utilizes end-to-end gradient descent, leveraging techniques like over-parameterization and straight-through estimators, to train oblique trees effectively for both standard supervised and online learning scenarios.
    \item \textbf{DTSemNet}~\cite{panda2024vanilla} encodes oblique decision trees as semantically equivalent neural networks using ReLU activations and linear operations, enabling optimization via standard gradient descent without relying on approximation techniques.
    \item \textbf{CART}~\cite{breiman1984cart} is a foundational algorithm that recursively partitions data, serving as a standard baseline for classification and regression trees.
\end{itemize}

We conducted extensive comparative experiments across several diverse datasets. The accuracy results are presented in Table~\ref{Oblique}. The results demonstrate that LHT consistently achieves superior or competitive performance, outperforming TAO, DGT, DTSemNet, and CART on a significant majority of the evaluated datasets.

Notably, LHT exhibits a distinct advantage on several challenging datasets, such as those with high dimensionality or large scale (e.g., MNIST, Letter, Avila Bible, Optical Recog.). This suggests LHT's effectiveness in learning complex decision boundaries. While baseline methods achieve performance comparable to LHT on some simpler datasets (e.g., Acute Inflam., Banknote), LHT's overall strong performance across the spectrum of tasks underscores its effectiveness and potential as a SOTA oblique decision tree algorithm.

\begin{table*}[ht]
	\caption{Percentage accuracy on classification tasks with CART and SOTA Oblique Decision Trees for the datasets reported in \cite{NEURIPS2018_185c29dc} and \cite{panda2024vanilla}. $(n, m, k)$ denote samples, features, and classes. Averaged accuracy ± std is reported over 10 runs.The results for CART, TAO, DGT and DTSemNet are copied from \cite{panda2024vanilla}. Best results per row are in \textbf{bold}.}
	\label{Oblique}
	\centering
	\begin{center}
				\begin{tabular}{{@{}l@{\hspace{2pt}}c@{\hspace{8pt}}c@{\hspace{8pt}}c@{\hspace{8pt}}c@{\hspace{3pt}}c@{\hspace{2pt}}c@{}}}
					\toprule
					\multirow{2}{*}{Dataset} &\multirow{2}{*}{$(n,m,k)$} & \multicolumn{5}{c}{Percentage Accuracy (\( \uparrow \))}\\
                    \cmidrule(lr){3-7}
					 & &CART \cite{breiman1984cart} & TAO \cite{NEURIPS2018_185c29dc} & DGT \cite{karthikeyanlearning} & DTSemNet \cite{panda2024vanilla} & LHT   \\
\midrule
Protein &  (24387,357,3)   & 57.5 $\pm$ 0.0 & 68.4 $\pm$ 0.2 & 67.8 $\pm$ 0.4 & 68.6 $\pm$ 0.2 & \bf{69.1 $\pm$ 0.8} \\
SatImages &  (6435,36,6) & 84.1 $\pm$ 0.3 & 87.4 $\pm$ 0.3 & 86.6 $\pm$ 0.9 & 87.5 $\pm$ 0.5 & \bf{90.9 $\pm$ 0.6} \\
Segment &   (2310,19,7)   & 94.2 $\pm$ 0.8 & 95.0 $\pm$ 0.8 & 95.8 $\pm$ 1.1 & 96.1 $\pm$ 0.5 & \bf{97.0 $\pm$ 0.7} \\
Pendigits &   (10992,16,10)   & 89.9 $\pm$ 0.3 & 96.0 $\pm$ 0.3 & 96.3 $\pm$ 0.2 & 97.0 $\pm$ 0.3 & \bf{97.1 $\pm$ 0.3} \\
Connect-4 &   (67757,126,3)    & 74.0 $\pm$ 0.6 & 81.2 $\pm$ 0.2 & 79.5 $\pm$ 0.2 & 82.0 $\pm$ 0.3 & \bf{82.1 $\pm$ 0.2} \\
MNIST&  (70000,784,10) & 85.5 $\pm$ 0.1 & 95.0 $\pm$ 0.1 & 94.0 $\pm$ 0.3 & 96.1 $\pm$ 0.1 & \bf{97.4 $\pm$ 0.3} \\
SensIT &   (98528,100,3)   & 78.3 $\pm$ 0.0 & 82.5 $\pm$ 0.1 & 83.6 $\pm$ 0.2 & 84.2 $\pm$ 0.1 & \bf{86.6 $\pm$ 0.1} \\
Letter & (20000,16,26)  & 70.1 $\pm$ 0.1 & 87.4 $\pm$ 0.4 & 86.1 $\pm$ 0.7 & 89.1 $\pm$ 0.2 & \bf{95.0 $\pm$ 0.2} \\
Balance Scale &  (625,4,3)   & 74.9 $\pm$ 3.6 & 77.4 $\pm$ 3.1 & 88.6 $\pm$ 1.7 & 90.2 $\pm$ 2.2 & \bf{90.3 $\pm$ 0.7} \\
Banknote &  (1372,4,2)    & 93.6 $\pm$ 2.2 & 96.6 $\pm$ 1.3 & \bf{99.8 $\pm$ 0.4} & \bf{99.8 $\pm$ 0.4} & \bf{99.8 $\pm$ 0.1} \\
Blood Trans. &  (784,4,2)  & 77.1$\pm$1.8 & 76.9 $\pm$ 2.1 & 78.3 $\pm$ 2.4 & 78.5 $\pm$ 1.7 & \bf{79.5 $\pm$ 2.4} \\
Acute Inflam. 1 &  (120,6,2)    & \bf{100 $\pm$ 0} & 99.7 $\pm$ 1.2 & \bf{100 $\pm$ 0} & \bf{100 $\pm$ 0} & \bf{100 $\pm$ 0} \\
Acute Inflam. 2 &   (120,6,2)    & 99.0 $\pm$ 2.6 & 99.0 $\pm$ 2.6 & \bf{100 $\pm$ 0} & \bf{100 $\pm$ 0} & \bf{100 $\pm$ 0} \\
Car Evaluation &  (1728,6,4)   & 84.3 $\pm$ 1.4 & 84.5 $\pm$ 1.5 & 92.1 $\pm$ 2.4 & 93.3 $\pm$ 2.2 & \bf{97.1 $\pm$ 1.0} \\
Breast Cancer &   (699,9,2)  & 94.7 $\pm$ 1.7 & 94.7 $\pm$ 1.7 & 97.2 $\pm$ 1.0 & 97.2 $\pm$ 1.3 & \bf{97.3 $\pm$ 1.0} \\
Avila Bible &  (20867,10,12) & 54.0 $\pm$ 1.3 & 55.8 $\pm$ 0.8 & 59.7 $\pm$ 1.8 & 62.2 $\pm$ 1.4 & \bf{73.2 $\pm$ 0.3} \\
Wine-Red &  (1599,11,6)  & 55.9 $\pm$ 2.3 & 56.9 $\pm$ 2.5 & 56.6 $\pm$ 1.4 & 58.6 $\pm$ 2.2 & \bf{63.1 $\pm$ 0.9} \\
Wine-White &  (6898,11,7) & 52.0 $\pm$ 1.3 & 52.3 $\pm$ 1.5 & 52.1 $\pm$ 1.6 & 53.5 $\pm$ 1.4 & \bf{61.7 $\pm$ 0.9} \\
Dry Bean &  (13611,16,7)  & 80.5 $\pm$ 1.9 & 83.2 $\pm$ 1.5 & 89.0 $\pm$ 1.6 & 91.4 $\pm$ 0.5 & \bf{92.5 $\pm$ 0.3} \\
Climate Crashes &  (540,18,2)   & 91.8 $\pm$ 1.8 & 90.6 $\pm$ 2.2 & 92.4 $\pm$ 2.6 & 92.9 $\pm$ 1.4 & \bf{93.1 $\pm$ 2.4} \\
Conn. Sonar & (208,60,2)   & 70.6 $\pm$ 6.6 & 70.9 $\pm$ 5.8 & 80.8 $\pm$ 4.5 & 82.1 $\pm$ 5.1 & \bf{82.5 $\pm$ 5.8} \\
Optical Recog. & (5620,64,10)   & 53.2 $\pm$ 3.2 & 64.6 $\pm$ 6.5 & 91.9 $\pm$ 1.0 & 93.3 $\pm$ 1.0 & \bf{95.5 $\pm$ 0.6} \\
					\bottomrule
				\end{tabular}
	\end{center}
    \vskip -0.12in
\end{table*}

\subsection{Comparison with Representative Methods}

To further evaluate the performance of LHT, we test it on several public datasets from \cite{NEURIPS2018_185c29dc}, covering problems with small, medium, and large sample sizes, and \textcolor{black}{compare its results against a suite of highly competitive, SOTA tree-based ensemble methods. These methods, widely recognized for their effectiveness across diverse tabular data tasks, are extensively adopted in both academic research and industrial applications.} The comparative methods are described as follows:

    \begin{itemize}
        \item \textbf{RF}~\cite{breiman2001random}: The Random Forest classifier implemented in \texttt{scikit-learn}~\cite{pedregosa2011scikit}.
        \item \textbf{XGBoost}~\cite{chen2016XGBoost}: An efficient gradient boosting decision tree implementation, using the official \texttt{XGBoost} library\footnote{\url{https://XGBoost.ai/}}.
        \item \textbf{CatBoost}~\cite{prokhorenkova2018CatBoost}: A gradient boosting library effective with categorical features, using the official \texttt{CatBoost} library\footnote{\url{https://CatBoost.ai/}}.
        \item \textbf{LightGBM}~\cite{ke2017LightGBM}: A fast, low-memory gradient boosting framework, using the official \texttt{LightGBM} library\footnote{\url{https://LightGBM.readthedocs.io/}}.
    \end{itemize}

\begin{table*}[t]
	\caption{Percentage accuracy comparison with representative methods. $(n, m, k)$ denote samples, features, and classes. Mean accuracy ± std over 10 runs. Best results per row are in \textbf{bold}.  The methods include Random Forest (RF), XGBoost (XGB), CatBoost (CatB), and LightGBM (LGBM).}
	\label{sample-tables}
	\centering
	\begin{center}
				\begin{tabular}{@{}l@{\hspace{5pt}}c@{\hspace{5pt}}c@{\hspace{5pt}}c@{\hspace{5pt}}c@{\hspace{5pt}}c@{\hspace{5pt}}c@{}}
					\toprule
					\multirow{2}{*}{Dataset} & \multirow{2}{*}{($n$,$m$,$k$)} & \multicolumn{5}{c}{Percentage Accuracy (\( \uparrow \))}\\
                    \cmidrule(lr){3-7}
                    
					 & &RF \cite{breiman2001random} & XGB \cite{chen2016XGBoost} & CatB \cite{prokhorenkova2018CatBoost} & LGBM \cite{ke2017LightGBM} & LHT \\
					\midrule
Protein &  (24387,357,3)   & 48.2 $\pm$ 0.6 &  48.8 $\pm$ 0.6  &  48.8 $\pm$ 0.6  &  48.3 $\pm$ 0.6 & \bf{69.1 $\pm$ 0.8} \\
SatImages &  (6435,36,6) &  90.2 $\pm$ 0.7  & 90.2 $\pm$ 0.7 &  90.8 $\pm$ 0.7  &  90.8$\pm$0.6  & \bf{90.9 $\pm$ 0.6} \\
Segment &   (2310,19,7)   & 96.7 $\pm$ 0.8 &  \bf{97.8 $\pm$ 0.7} &  96.7 $\pm$ 0.9  & 97.4 $\pm$ 0.7 & 97.0 $\pm$ 0.7 \\
Pendigits &   (10992,16,10)   &  96.3 $\pm$ 0.3  &  96.4 $\pm$ 0.3  &  \bf{97.1 $\pm$ 0.3}  &  96.4 $\pm$ 0.3  & \bf{97.1 $\pm$ 0.3} \\
Connect-4 &   (67757,126,3)    &  78.5 $\pm$ 0.4  &  85.8 $\pm$ 0.3 &  84.1 $\pm$ 0.3  &  \bf{87.7} $\pm$ 0.3  &  82.1 $\pm$ 0.2  \\
MNIST&  (70000,784,10) & 96.6 $\pm$ 0.2 & 97.5 $\pm$ 0.2 &  96.9 $\pm$ 0.2  &  \bf{98.0 $\pm$ 0.1} &  97.4 $\pm$ 0.3  \\
SensIT &   (98528,100,3)   & 85.9 $\pm$ 0.2& \bf{87.5 $\pm$ 0.2} & 87.3 $\pm$ 0.2 &  87.3 $\pm$ 0.3  & 86.6 $\pm$ 0.1 \\
Letter & (20000,16,26)  &  94.9 $\pm$ 0.3  &  95.9 $\pm$ 0.3  & 96.1 $\pm$ 0.3 & \bf{96.7 $\pm$ 0.3} & 95.0 $\pm$ 0.2 \\
					Wine & (178,13,3) & 97.5 $\pm$ 1.1 & 96.8 $\pm$ 1.1 & 97.0 $\pm$ 0.9 & 97.3 $\pm$ 0.8 & \bf{97.8 $\pm$ 1.6} \\
					Seeds & (210,7,3) & 91.1 $\pm$ 1.3 & 93.4 $\pm$ 1.4 & 93.1 $\pm$ 1.6 & 92.6 $\pm$ 1.5 & \bf{94.5 $\pm$ 2.1} \\
					WDBC & (569,30,2) & 95.7 $\pm$ 1.3 & 96.6 $\pm$ 0.7 & 96.6 $\pm$ 1.3 & 96.1 $\pm$ 0.6 & \bf{98.2 $\pm$ 1.1} \\
					Banknote & (1372,3,2) & 98.9 $\pm$ 0.2 & 98.9 $\pm$ 0.2 & 99.5 $\pm$ 0.2 & 99.2 $\pm$ 0.2 & \bf{99.8 $\pm$ 0.2} \\
                    	Rice & (3810,7,2) & 92.2 $\pm$ 0.3 & 92.2 $\pm$ 0.4 & \bf{92.7 $\pm$ 0.4} & 92.2 $\pm$ 0.3 & 91.3 $\pm$ 0.5 \\
					Spambase & (4601,57,2) & 93.2 $\pm$ 0.3 & \bf{94.2 $\pm$ 0.3} & 94.1 $\pm$ 0.3 & \bf{94.2 $\pm$ 0.3} & 93.8 $\pm$ 0.6 \\
					EEG & (14980,14,2) & 91.1 $\pm$ 0.2 & 94.6 $\pm$ 0.3 & 92.6 $\pm$ 0.4 & 92.4 $\pm$ 0.3 & \bf{94.8 $\pm$ 0.3} \\
					MAGIC  & (19020,10,2) & 84.4 $\pm$ 0.2 & 87.8 $\pm$ 0.2 & 87.5 $\pm$ 0.2 & \bf{87.8 $\pm$ 0.2} & 86.1 $\pm$ 0.3 \\
					SKIN & (245057,3,2) & 99.84 $\pm$ 0.01 & 99.93 $\pm$ 0.01 & 99.89 $\pm$ 0.01 & 99.92 $\pm$ 0.01 & \bf{99.97 $\pm$ 0.01} \\
					\bottomrule
				\end{tabular}
	\end{center}
    \vskip -0.1in
\end{table*}

Table~\ref{sample-tables} shows the comparison results between LHT and representative ensemble methods, including RF\ and gradient boosting variants (XGBoost, CatBoost, LightGBM), which represent SOTA performance on many tabular tasks.  \textcolor{black}{Despite this highly challenging competition, the results indicate that LHT exhibits remarkable competitiveness against these powerful baselines.} Notably, LHT achieves the highest accuracy on a large number of datasets such as Protein, SatImages, Pendigits, Wine, Seeds, WDBC, Banknote, EEG, and SKIN. On the remaining datasets where ensemble methods, particularly LightGBM, XGBoost, or CatBoost, achieve the top performance (e.g., Connect-4, MNIST, Letter, Rice), LHT generally demonstrates comparable or closely competitive results. Overall, this rigorous comparison validates the effectiveness of LHT, demonstrating its capability to effectively challenge and often outperform widely-adopted, sophisticated ensemble techniques across diverse benchmarks.

\textbf{Computational Efficiency.}
To evaluate the training efficiency of LHT, we compare its training time with RF\ and gradient boosting methods on three representative datasets in Table~\ref{time-tables}. The training of LHT involves sequential learning of trees; however, the learning process for each tree can theoretically be performed independently. The values in parentheses indicate the maximum time required to learn a single tree among all trees, representing the potential shortest training time for LHT under ideal parallelization conditions.
The results show that the total training time of LHT generally falls between that of RF\ and  CatBoost. For instance, on the Pendigits dataset, LHT's total time (2.5454s) is slower than RF\ (1.2444s) but significantly faster than CatBoost\ (90.2304s), and is comparable to XGBoost\ (2.4511s) and LightGBM\ (1.0643s). Importantly, the maximum single-tree training time for LHT (values in parentheses) is very short. This suggests that with parallel computation, the training process of LHT could be substantially accelerated, potentially becoming faster than current gradient boosting methods. This offers a potential advantage for LHT in scenarios requiring rapid model training.

\textbf{Implementation Details.} The implementation of LHT is available at link\footnote{\url{https://github.com/Hongyi-Li-sz/LHT_model}}. Our method builds upon a GitHub repository for MNIST classification\footnote{\url{https://github.com/Bill-Armstrong/Real-Time-Machine-Learning}}.
All experiments were conducted on an AMD Ryzen 7 5800H processor with 16GB RAM, using a single CPU core. For pre-partitioned datasets, we used the provided training and test sets. For non-pre-partitioned datasets, we applied a random 80\%/20\% train-test split. Experimental hyperparameters and dataset information are provided in Appendix~\ref{hyperpara}.

\begin{table*}[t]
\vskip -0.2in
	\caption{Training time (seconds) comparison on selected classification tasks. Values for LHT show total sequential time, with maximum single-tree training time (potential parallel time) in parentheses. }
	\label{time-tables}
	\centering
	\begin{center}
				\begin{tabular}{@{}l@{\hspace{8pt}}c@{\hspace{10pt}}c@{\hspace{10pt}}c@{\hspace{10pt}}c@{\hspace{10pt}}c@{\hspace{10pt}}c@{}}
					\toprule
					\multirow{2}{*}{Dataset} & \multirow{2}{*}{($n$,$m$,$k$)} & \multicolumn{5}{c}{Training Time (\( \downarrow \))}\\
                    \cmidrule(lr){3-7}
                    
					 & &RF \cite{breiman2001random} & XGB \cite{chen2016XGBoost} & CatB \cite{prokhorenkova2018CatBoost} & LGBM \cite{ke2017LightGBM} & LHT \\
					\midrule

SatImages &  (6435,36,6) &  0.3750  & 0.6689 &  23.5407  &  0.5876  & 2.5115 (0.0291) \\
Segment &   (2310,19,7)   & 0.2816 &  0.5869 &  4.1902  & 0.1424 & 0.0312 (0.0092) \\
Pendigits &   (10992,16,10)    &  1.2444  &  2.4511 & 90.2304  &  1.0643  &    2.5454 (0.0272)\\

					\bottomrule
				\end{tabular}
	\end{center}
    \vskip -0.12in
\end{table*}

\section{Related Work}
\label{sec:related_work}

Our work builds upon and distinguishes itself from extensive research in tree-based models, particularly standard decision trees and their oblique counterparts.

\textbf{Decision Trees (Axis-Parallel).}
Foundational decision tree algorithms such as ID3 \cite{quinlan1986induction}, C4.5 \cite{quinlan1993c4}, and CART \cite{breiman1984cart} established core induction principles using criteria like information gain, gain ratio, and Gini impurity. These methods construct trees with \emph{axis-parallel} splits, partitioning the feature space along individual feature dimensions. 
While interpretable, single decision trees can be unstable and prone to overfitting. Ensemble techniques were developed to mitigate these issues. Bagging methods, most notably Random Forests \cite{breiman2001random}, average predictions from multiple trees grown on bootstrapped data subsets, reducing variance \cite{breiman1996bagging}. Ongoing research explores improvements like dataset reconstruction from forests \cite{pmlr-v235-ferry24a} and density estimation variants \cite{pmlr-v162-wen22c}. Boosting methods, in contrast, build models sequentially, with each new tree correcting errors of the ensemble \cite{breiman1996bias}. AdaBoost \cite{freund1997decision} adaptively weights samples, while Gradient Boosting Decision Trees (GBDT) \cite{friedman2001greedy} employ gradient descent in function space. Modern optimized GBDT frameworks like XGBoost \cite{chen2016XGBoost}, LightGBM \cite{ke2017LightGBM}, and CatBoost \cite{prokhorenkova2018CatBoost} achieve SOTA results on many tabular datasets, primarily relying on ensembles of axis-parallel trees.

\textbf{Oblique Decision Trees.}
To provide a more flexible segmentation, \emph{oblique decision trees} utilize splits based on hyperplanes (\(\boldsymbol{w}^\top \boldsymbol{x} - c = 0\)), allowing them to capture linear relationships between features more directly. The primary challenge lies in determining the hyperplane parameters (\(\boldsymbol{w}, c\)). A variety of approaches have been proposed. Early work like OC1 \cite{murthy1993oc1}, specifically designed for oblique trees, pioneered heuristic search methods, employing techniques such as random search and coordinate descent to find effective oblique splits. Many subsequent methods also involve \textit{iterative optimization or heuristics}. GUIDE \cite{loh2014fifty} grows a tree greedily and recursively and prioritizes unbiased feature selection. \textit{Global optimization} approaches aim to find the optimal tree structure. 
\textcolor{black}{Learning optimal trees is computationally challenging, as even the simplest case (binary inputs and outputs) is NP-hard \cite{laurent1976constructing,hancock1996lower}.}
Notably, Optimal Classification Trees (OCT) \cite{bertsimas2017optimal} leverage mathematical optimization by formulating the problem as a mixed-integer program (MIP) to find a globally optimal tree. However, solving the inherent NP-hard MIP problem limits its application typically to smaller datasets \cite{zharmagambetov2021non}. Recent studies continue to explore different facets of oblique trees. 
\textcolor{black}{CO2 \cite{NIPS2015_1579779b,norouzi2015co2} formulates a convex-concave upper bound on the tree’s empirical loss, optimizing it via stochastic gradient descent (SGD) given an initial tree structure (e.g., from CART). While SGD enables scalability to large datasets, each iteration may not reduce the objective. }
TAO~\cite{NEURIPS2018_185c29dc,pmlr-v119-zharmagambetov20a} uses alternating optimization for a non-greedy global search and can be extended to generate oblique splits. DGT~\cite{karthikeyanlearning} utilizes end-to-end gradient descent, leveraging techniques like over-parameterization and straight-through estimators, to train oblique trees effectively for both standard supervised and online learning scenarios.
DTSemNet~\cite{panda2024vanilla} encodes oblique decision trees as semantically equivalent neural networks using ReLU activations and linear operations, enabling optimization via standard gradient descent without relying on approximation techniques.

Our proposed LHT contributes to the oblique tree literature by introducing a distinct hyperplane construction method. Unlike the aforementioned techniques that predominantly rely on iterative optimization or complex global formulations, LHT employs a \emph{non-iterative, statistically-driven} procedure. At each node, the hyperplane orientation \(\boldsymbol{w}\) is \textcolor{black}{\emph{directly}} derived from the difference in feature expectations between the target and non-target classes. To our knowledge, this specific mechanism—using feature expectation differences to directly guide oblique hyperplane construction without iteration—is unique to LHT, offering a novel paradigm for building oblique decision trees.

\section{Limitation}
\label{lim}
\textcolor{black}{Similar to other tree-based models, LHT may be less effective than deep learning approaches when dealing with data that has strong spatial structure and homogeneity, such as images, as deep learning models are better suited for capturing complex spatial patterns. As a result, LHT's strengths are most evident in areas like tabular data analysis. Future work will investigate the extension of LHT to image classification problems, aiming to achieve competitive or high-accuracy performance.}

\section{Discussion and Conclusion}
This paper presents a novel tree-based model, LHT, which uses feature expectation differences to directly construct oblique hyperplanes, eliminating the need for iterative optimization.  We evaluate the performance of LHT on several public datasets. The experimental results show that LHT exhibits remarkable performance in processing tabular data. Moreover, the contribution of each feature at every branching step is clearly visible. This transparency in decision-making is particularly valuable within the current context of responsible AI, where demands for fairness, transparency, and security are paramount. As such, LHT has the potential to play an increasingly significant role in addressing these challenges.

\bibliography{example_paper}
\bibliographystyle{unsrt}

\newpage
\appendix
 
\startcontents[appendix]
\section*{Appendix}
\printcontents[appendix]{}{1}{}
\newpage

\section{Pure Block Generation}
\label{case}
\subsection{Illustrative Case of Pure Block Generatio}

Figure \ref{LHT_f} demonstrates how four choices of the threshold \(c\) generate pure subblocks under the condition \(N_{\max} \geq \gamma\):
\begin{itemize}
    \item When \(N_1 = N_{\max}\), \(c = \min \text{NFS}\), producing a pure target block \(B_1\).
    \item When \(N_2 = N_{\max}\), \(c = \max \text{NFS} + \delta_1\), producing a pure target block \(B_2\).
    \item When \(N_3 = N_{\max}\), \(c = \min \text{TFS}\), producing a pure non-target block \(B_1\).
    \item When \(N_4 = N_{\max}\), \(c = \max \text{TFS} + \delta_1\), producing a pure non-target block \(B_2\).
\end{itemize}

These figures visually depict the practical outcomes of the \(c\) selection strategy.

\begin{figure*}[htbp]
\vskip 0.1in
	\centering    
	\subfigure[The case where $c=\min \text{NFS}$ is illustrated when $N_1 = N_{\max}$ and $N_{max}\geq\gamma$] 
	{
		\begin{minipage}[t]{0.5\linewidth}
			\centering          
			\includegraphics[width=1\textwidth]{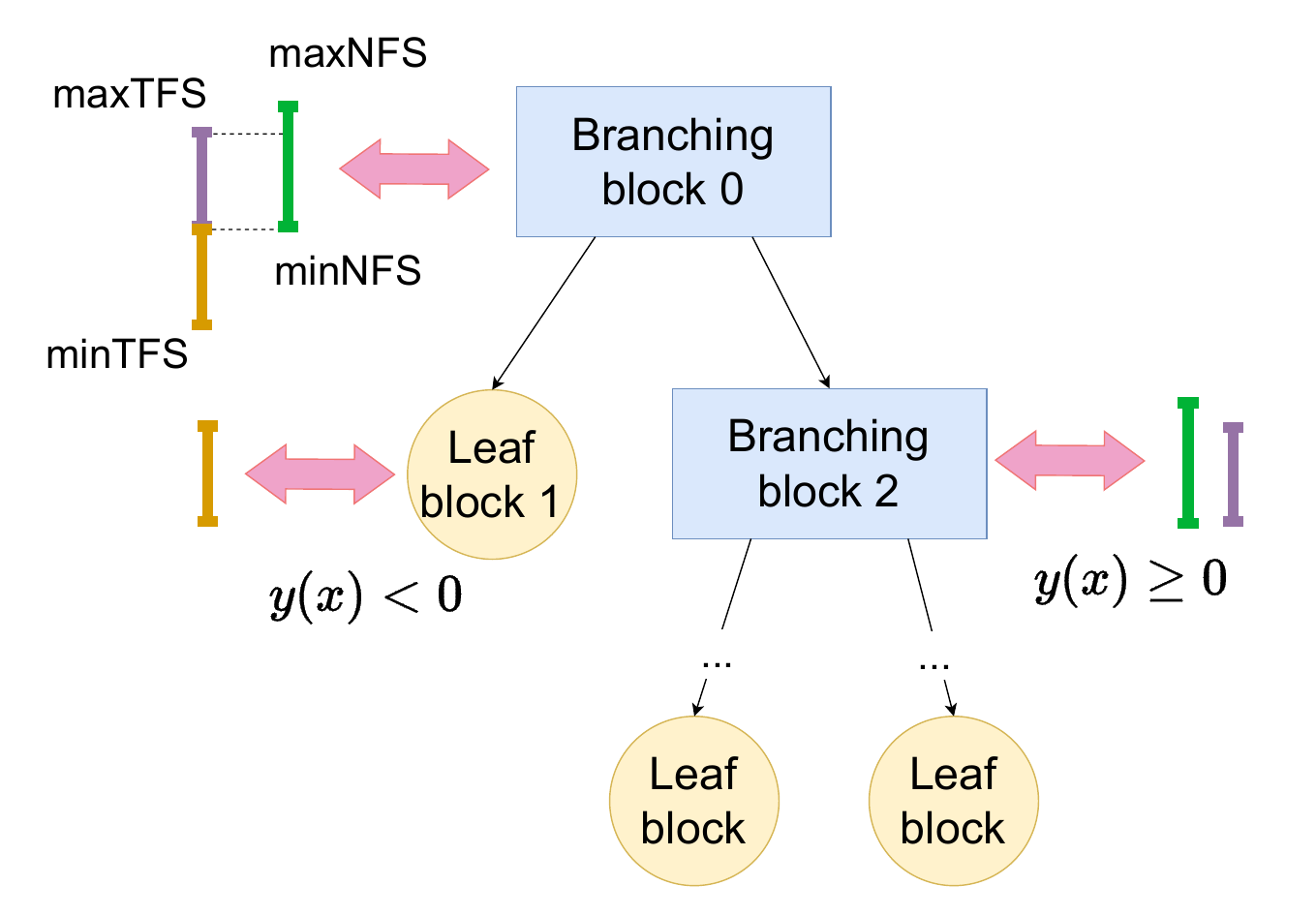}   
      \label{threshold1}
		\end{minipage}
	}
	\subfigure[The case where $c=\max \text{NFS} + \delta_1$ is illustrated when $N_2 = N_{\max}$ and $N_{max}\geq\gamma$] 
	{
		\begin{minipage}[t]{0.46\linewidth}
			\centering      
			\includegraphics[width=1\textwidth]{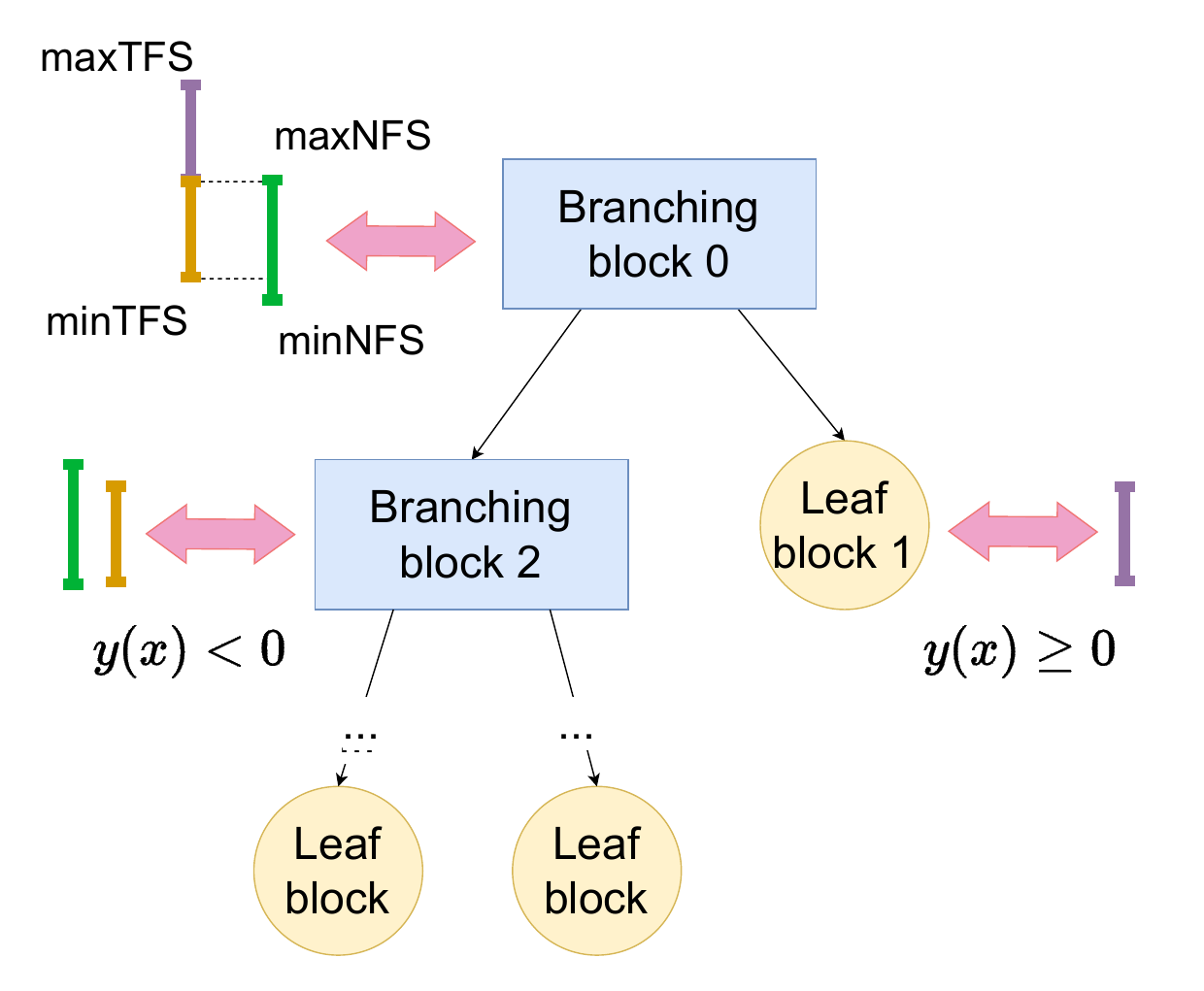}   
          \label{threshold2}
		\end{minipage}
	}
    	\subfigure[The case where $c=\min \text{TFS} $ is illustrated when $N_3 = N_{\max}$ and $N_{max}\geq\gamma$] 
	{
		\begin{minipage}[t]{0.5\linewidth}
			\centering          
			\includegraphics[width=1\textwidth]{fig//threshold3.pdf}   
      \label{threshold3}
		\end{minipage}
	}
	\subfigure[The case where $c=\max \text{TFS} + \delta_1$ is illustrated when $N_4 = N_{\max}$ and $N_{max}\geq\gamma$] 
	{
		\begin{minipage}[t]{0.46\linewidth}
			\centering      
			\includegraphics[width=1\textwidth]{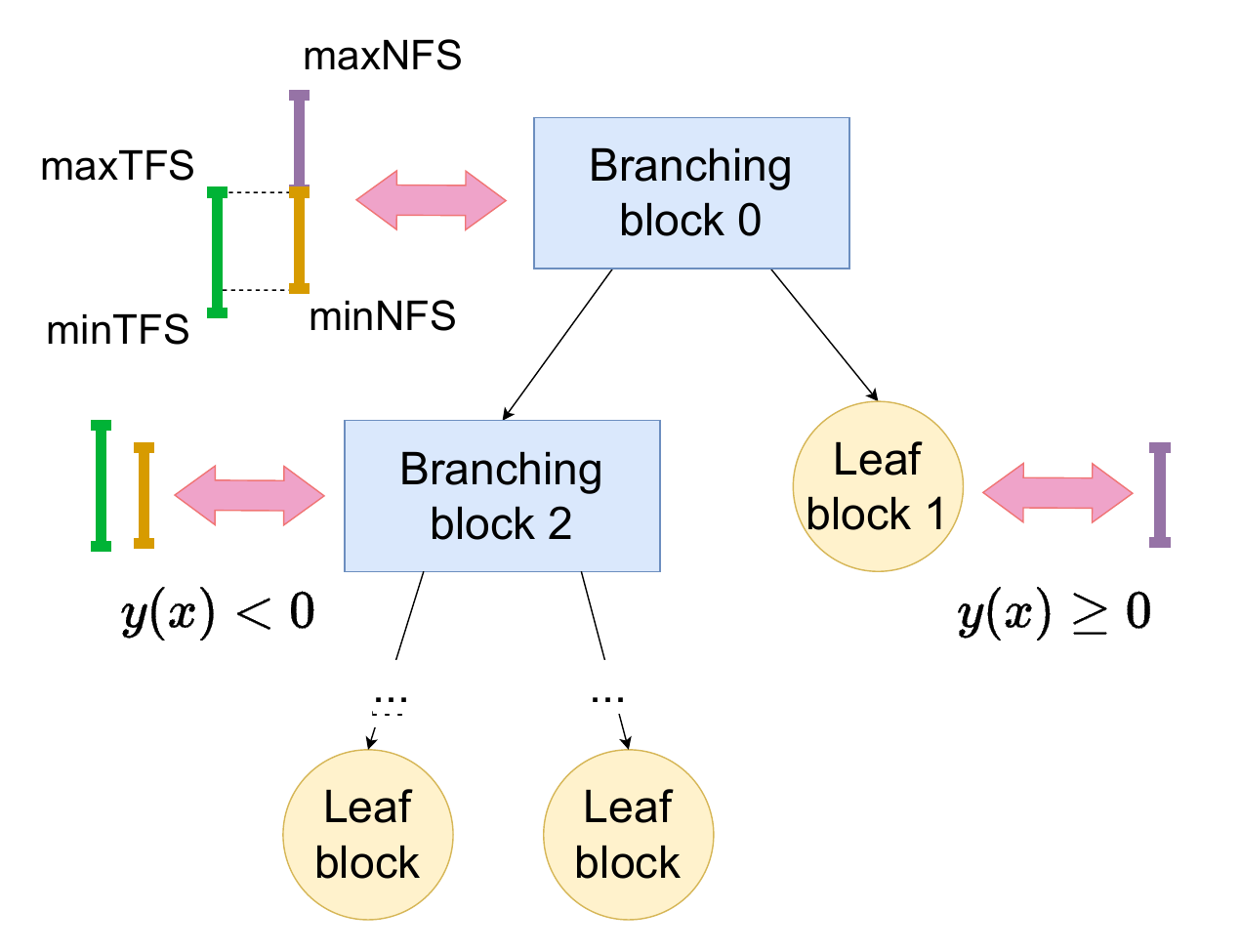}   
          \label{threshold4}
		\end{minipage}
	}
	\caption{Illustrative cases of pure block generation when \(N_{\max} \geq \gamma\). Each subfigure depicts the resulting subblocks for different \(c\) values.
} 
	\label{LHT_f}  
\end{figure*}
\vskip 0.3in

\subsection{The Role of \texorpdfstring{$\delta_1\:$}{}in Hyperplane Splitting}
The parameter \(\delta_1 > 0\) is a small positive constant used to adjust the splitting threshold in the LHT, ensuring the purity of subblocks by excluding boundary samples in maximum cases.
In the LHT, the hyperplane \(y(\boldsymbol{x}) = \text{FS}(\boldsymbol{x}) - c = 0\) divides samples into two subblocks:
\begin{itemize}
    \item \(B_1 = \{\boldsymbol{x}_j \mid \text{FS}(\boldsymbol{x}_j) < c\}\),
    \item \(B_2 = \{\boldsymbol{x}_j \mid \text{FS}(\boldsymbol{x}_j) \geq c\}\),
\end{itemize}
where \(\text{FS}(\boldsymbol{x}_j) = \sum_{i=1}^m w_i x_{ij}\) is the feature-weighted sum of sample \(\boldsymbol{x}_j\), and \(c\) is the threshold parameter. The goal is to select \(c\) such that at least one subblock is pure, containing only target or non-target samples. The choice of \(c\) is based on:
\begin{itemize}
    \item \(N_1 = \left| \{ j \mid \text{FS}(\boldsymbol{x}_j) \in \mathcal{FS}^\text{t}, \text{FS}(\boldsymbol{x}_j) < \min \text{NFS} \} \right|\): number of target samples with feature-weighted sum less than \(\min \text{NFS}\),
    \item \(N_2 = \left| \{ j \mid \text{FS}(\boldsymbol{x}_j) \in \mathcal{FS}^\text{t}, \text{FS}(\boldsymbol{x}_j) > \max \text{NFS} \} \right|\): number of target samples with feature-weighted sum greater than \(\max \text{NFS}\),
    \item \(N_3 = \left| \{ j \mid \text{FS}(\boldsymbol{x}_j) \in \mathcal{FS}^\text{nt}, \text{FS}(\boldsymbol{x}_j) < \min \text{TFS} \} \right|\): number of non-target samples with feature-weighted sum less than \(\min \text{TFS}\),
    \item \(N_4 = \left| \{ j \mid \text{FS}(\boldsymbol{x}_j) \in \mathcal{FS}^\text{nt}, \text{FS}(\boldsymbol{x}_j) > \max \text{TFS} \} \right|\): number of non-target samples with feature-weighted sum greater than \(\max \text{TFS}\),
\end{itemize}
with \(N_{\max} = \max\{N_1, N_2, N_3, N_4\}\), and:
\[
c = \begin{cases}
\min \text{NFS}, & \text{if } N_1 = N_{\max} \text{ and } N_{\max} \geq \gamma, \\
\max \text{NFS} + \delta_1, & \text{if } N_2 = N_{\max} \text{ and } N_{\max} \geq \gamma, \\
\min \text{TFS}, & \text{if } N_3 = N_{\max} \text{ and } N_{\max} \geq \gamma, \\
\max \text{TFS} + \delta_1, & \text{if } N_4 = N_{\max} \text{ and } N_{\max} \geq \gamma, \\
e, & \text{if } N_{\max} < \gamma,
\end{cases}
\]
where \(\gamma > 0\) is the minimum sample threshold, \(\delta_1 > 0\) is a small positive constant, and \(e\) is the average of extreme feature-weighted sums.

When \(c = \max \text{NFS}\) or \(c = \max \text{TFS}\) without \(\delta_1\), \(B_2\) becomes impure:
\begin{itemize}
    \item If \(c = \max \text{NFS}\), \(B_2 = \{\boldsymbol{x}_j \mid \text{FS}(\boldsymbol{x}_j) \geq \max \text{NFS}\}\) includes non-target samples with \(\text{FS}(\boldsymbol{x}_j) = \max \text{NFS}\) and target samples with \(\text{FS}(\boldsymbol{x}_j) > \max \text{NFS}\) ( \(N_2\) samples), resulting in impurity. Adding \(\delta_1\), \(c = \max \text{NFS} + \delta_1\), ensures \(B_2\) contains only target samples with \(\text{FS}(\boldsymbol{x}_j) > \max \text{NFS}\), forming a pure block. (See Figure \ref{threshold2}  for reference.)
    \item If \(c = \max \text{TFS}\), \(B_2 = \{\boldsymbol{x}_j \mid \text{FS}(\boldsymbol{x}_j) \geq \max \text{TFS}\}\) includes target samples with \(\text{FS}(\boldsymbol{x}_j) = \max \text{TFS}\) and non-target samples with \(\text{FS}(\boldsymbol{x}_j) > \max \text{TFS}\) ( \(N_4\) samples), resulting in impurity. Adding \(\delta_1\), \(c = \max \text{TFS} + \delta_1\), ensures \(B_2\) contains only non-target samples with \(\text{FS}(\boldsymbol{x}_j) > \max \text{TFS}\), forming a pure block. (See Figure \ref{threshold4}  for reference.)
\end{itemize}

In contrast, when \(c = \min \text{NFS}\) or \(c = \min \text{TFS}\), \(B_1\) is inherently pure without \(\delta_1\):
\begin{itemize}
    \item If \(c = \min \text{NFS}\), \(B_1 = \{\boldsymbol{x}_j \mid \text{FS}(\boldsymbol{x}_j) < \min \text{NFS}\}\) contains only target samples (\(N_1\) samples), as all non-target samples have \(\text{FS}(\boldsymbol{x}_j) \geq \min \text{NFS}\). (See Figure \ref{threshold1}  for reference.)
    \item If \(c = \min \text{TFS}\), \(B_1 = \{\boldsymbol{x}_j \mid \text{FS}(\boldsymbol{x}_j) < \min \text{TFS}\}\) contains only non-target samples (\(N_3\) samples), as all target samples have \(\text{FS}(\boldsymbol{x}_j) \geq \min \text{TFS}\). (See Figure \ref{threshold3}  for reference.)
\end{itemize}

Thus, \(\delta_1\) is essential in maximum cases to adjust \(c\) and exclude boundary samples, ensuring \(B_2\)'s purity, while it is unnecessary in minimum cases where \(B_1\) is naturally pure.

\section{LHT Construction and Prediction Algorithms}

\label{alg_app}
\newcommand{\E}{\mathbb{E}}
\newcommand{\FS}{\text{FS}}

This section outlines the procedures for LHT construction and prediction.

\begin{algorithm}[H] 
\caption{LHT Node Splitting and Tree Building (Recursive)}
\label{alg:lht_build_appendix} 
\begin{algorithmic}[1]

\Procedure{BuildLHTNode}{NodeData $(\boldsymbol{X}, P)$, current\_depth} 
    \State \textbf{Check Stopping Conditions:}
    \If{NodeData is pure \textbf{or} $|P| < \text{min\_samples}$}
        \State Create a \textbf{LeafNode} and \textcolor{black}{compute the membership function.}

    \EndIf

    \State \textbf{Feature Selection:}
    \State Calculate variance $V_i$; select features with $V_i > \alpha$.
    \State Calculate $\text{SD}_i$, $\overline{\text{SD}}$, $w_i$.
    \State Select final features with $|w_i| > \beta$.

    \State \textbf{Compute Feature-Weighted Sums (FS):}
    \State For each sample $\bm{x}_j$, compute $\FS(\bm{x}_j)$ using final features and weights.
    \State Get FS sets $\mathcal{FS}^\text{t}$ and $\mathcal{FS}^\text{nt}$.

    \State \textbf{Determine Optimal Split Threshold $c$:}
    \State Calculate 5 candidates ($\min \text{TFS}$, $\max \text{TFS}$, $\min \text{NFS}$, $\max \text{NFS}$, $e$).
    \State Calculate $N_1, N_2, N_3, N_4$, $N_{\max}$.
    \State Select $c$ based on $N_{\max}$ and $\gamma$ (using logic from Eq. \ref{c}).

    \State \textbf{Create Branch Node:}
    \State Create a BranchNode; store final weights $\{w_i\}$ and threshold $c$.
    \State \textbf{Split Data:}
    \State LeftData $\leftarrow \{(\bm{x}_j, p_j) \mid \FS(\bm{x}_j) < c\}$.
    \State RightData $\leftarrow \{(\bm{x}_j, p_j) \mid \FS(\bm{x}_j) \geq c\}$.

    \State \textbf{Recursively Build Subtrees:}
    \State BranchNode.left\_child $\leftarrow$ \Call{BuildLHTNode}{LeftData, current\_depth + 1}.
    \State BranchNode.right\_child $\leftarrow$ \Call{BuildLHTNode}{RightData, current\_depth + 1}.

    \State \textbf{Return} BranchNode.
\EndProcedure
\end{algorithmic}
\end{algorithm}

\begin{algorithm}[H] 
\caption{LHT Prediction}
\label{alg:lht_predict_appendix} 
\begin{algorithmic}[1]
\Procedure{PredictLHT}{TrainedLHT (root node), TestData $\bm{x}$}
    \State CurrentNode $\leftarrow$ TrainedLHT.
    \While{CurrentNode is not a LeafNode}
        \State Get weights $\{w_i\}$ and threshold $c$ from CurrentNode.
        \State Compute $\FS(\bm{x}) = \sum_{i} w_i x_i$.
        \State Compute $y(\bm{x}) = \FS(\bm{x}) - c$.
        \If{$y(\bm{x}) < 0$}
            \State CurrentNode $\leftarrow$ CurrentNode.left\_child.
        \Else
            \State CurrentNode $\leftarrow$ CurrentNode.right\_child.
        \EndIf
    \EndWhile
    \State \Comment{CurrentNode is now the reached LeafNode}
    \State Get Least Squares parameters ($a_i^*, b^*, \E[X_i], \E[P]$) from CurrentNode.
    \State Compute $\hat{p}(\bm{x}) = \sum a_i^*(x_i - \E[X_i]) + \E[P]$.
    \State Compute MembershipValue $\mu(\bm{x}) = \max\{0, \min\{\hat{p}(\bm{x}), 1\}\}$.
    \State \textbf{Return} MembershipValue ($\mu(\bm{x})$).
\EndProcedure
\end{algorithmic}
\end{algorithm}


\section{Convergence Analysis of the Block Splitting Process}
\label{sec:appendix_proof}

In this section, we establish the convergence of the proposed block splitting algorithm. We prove that each split operation results in two non-empty subblocks. \textcolor{black}{Because each step effectively reduces the size of the data by partitioning the samples into strictly smaller (non-empty) sets, this ensures the process terminates in a finite number of steps, thereby preventing infinite loops.}

\subsection{Assumptions}

We make the following standard assumptions for the analysis:
\setcounter{assumption}{0}
\begin{assumption} \label{ass:non_trivial_block1}
The block to be split contains at least two samples (\(n \geq 2\)). Furthermore, the block contains at least one sample belonging to the target class and at least one sample belonging to the non-target class.
\end{assumption}
\begin{assumption} \label{ass:splittable1}
The feature-weighted sums \(\text{FS}(\boldsymbol{x}_j)\) are distinct for at least two samples within the block. This ensures that a split is always possible.
\end{assumption}
\begin{assumption} \label{ass:threshold_gamma1}
The threshold \(\gamma\) used in the splitting criterion is a positive integer satisfying \(1 \leq \gamma \leq n\).
\end{assumption}

\subsection{Notation and Setup}

Let a block consist of \(n\) samples \(\{\boldsymbol{x}_1, \ldots, \boldsymbol{x}_n\}\), where each \(\boldsymbol{x}_j \in \mathbb{R}^m\). The feature-weighted sum for sample \(\boldsymbol{x}_j\) is defined as:
\[
\text{FS}(\boldsymbol{x}_j) = \sum_{i=1}^m w_i x_{ij},
\]
where \(w_i\) represents the weight associated with the \(i\)-th feature. The block splitting process partitions the samples based on a threshold \(c\). A sample \(\boldsymbol{x}_j\) is assigned to the first subblock if \(\text{FS}(\boldsymbol{x}_j) < c\) and to the second subblock if \(\text{FS}(\boldsymbol{x}_j) \geq c\).

Let \(\mathcal{S} = \{1, \ldots, n\}\) be the index set of samples in the block. Let \(\mathcal{S}^\text{t} \subseteq \mathcal{S}\) be the index set of target class samples and \(\mathcal{S}^\text{nt} \subseteq \mathcal{S}\) be the index set of non-target class samples. By Assumption \ref{ass:non_trivial_block1}, \(\mathcal{S}^\text{t} \neq \emptyset\) and \(\mathcal{S}^\text{nt} \neq \emptyset\). Let \(\mathcal{FS} = \{\text{FS}(\boldsymbol{x}_j) \mid j \in \mathcal{S}\}\) be the set of feature-weighted sums. We define:
\begin{itemize}
    \item \(\mathcal{FS}^\text{t} = \{\text{FS}(\boldsymbol{x}_j) \mid j \in \mathcal{S}^\text{t}\}\)
    \item \(\mathcal{FS}^\text{nt} = \{\text{FS}(\boldsymbol{x}_j) \mid j \in \mathcal{S}^\text{nt}\}\)
    \item \(\max \text{TFS} = \max(\mathcal{FS}^\text{t})\), \(\min \text{TFS} = \min(\mathcal{FS}^\text{t})\)
    \item \(\max \text{NFS} = \max(\mathcal{FS}^\text{nt})\), \(\min \text{NFS} = \min(\mathcal{FS}^\text{nt})\)
\end{itemize}
Note that these extrema exist since \(\mathcal{FS}^\text{t}\) and \(\mathcal{FS}^\text{nt}\) are non-empty finite sets.

The splitting threshold \(c\) is selected based on the following quantities:
\[
\begin{aligned}
N_1 &= | \{ j \in \mathcal{S}^\text{t} \mid \text{FS}(\boldsymbol{x}_j) < \min \text{NFS} \} |, \\
N_2 &= | \{ j \in \mathcal{S}^\text{t} \mid \text{FS}(\boldsymbol{x}_j) > \max \text{NFS} \} |, \\
N_3 &= | \{ j \in \mathcal{S}^\text{nt} \mid \text{FS}(\boldsymbol{x}_j) < \min \text{TFS} \} |, \\
N_4 &= | \{ j \in \mathcal{S}^\text{nt} \mid \text{FS}(\boldsymbol{x}_j) > \max \text{TFS} \} |.
\end{aligned}
\]
Let \(N_{\max} = \max\{N_1, N_2, N_3, N_4\}\). The threshold \(c\) is chosen as follows:
\[
c = \begin{cases}
\min \text{NFS}, & \text{if } N_1 = N_{\max} \text{ and } N_{\max} \geq \gamma, \\
\max \text{NFS} + \delta_1, & \text{if } N_2 = N_{\max} \text{ and } N_{\max} \geq \gamma, \\
\min \text{TFS}, & \text{if } N_3 = N_{\max} \text{ and } N_{\max} \geq \gamma, \\
\max \text{TFS} + \delta_1, & \text{if } N_4 = N_{\max} \text{ and } N_{\max} \geq \gamma, \\
e, & \text{if } N_{\max} < \gamma,
\end{cases}
\]
where \(e = \frac{\min \text{NFS} + \max \text{NFS} + \min \text{TFS} + \max \text{TFS}}{4}\), and \(\delta_1\) is an infinitesimally small positive number ensuring that \(c\) is strictly greater than \(\max \text{NFS}\) or \(\max \text{TFS}\) respectively, but smaller than the next distinct value in \(\mathcal{FS}\) if one exists.

\subsection{Convergence Guarantee}

We now state and prove the main theorem regarding the convergence of the splitting process.
\setcounter{theorem}{0}
\begin{theorem} \label{thm:convergence1}
Under Assumptions \ref{ass:non_trivial_block1}, \ref{ass:splittable1}, and \ref{ass:threshold_gamma1}, the block splitting process based on the threshold \(c\) defined above always partitions the block into two non-empty subblocks: \(B_1 = \{\boldsymbol{x}_j \mid \text{FS}(\boldsymbol{x}_j) < c\}\) and \(B_2 = \{\boldsymbol{x}_j \mid \text{FS}(\boldsymbol{x}_j) \geq c\}\).
\end{theorem}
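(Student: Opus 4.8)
The plan is to proceed by a case analysis on the branch of the definition of $c$ that is taken, showing in each case that $B_1 \neq \emptyset$ and $B_2 \neq \emptyset$. The unifying principle is that $c$ is always chosen to lie strictly between the minimum and maximum of $\mathcal{FS}$, so at least one sample falls on each side. First I would record two preliminary facts. By Assumption~\ref{ass:splittable1}, $\min \mathcal{FS} < \max \mathcal{FS}$, and since $\mathcal{FS} = \mathcal{FS}^{\text{t}} \cup \mathcal{FS}^{\text{nt}}$ with both nonempty (Assumption~\ref{ass:non_trivial_block1}), we have $\min\mathcal{FS} = \min\{\min\text{TFS}, \min\text{NFS}\}$ and $\max\mathcal{FS} = \max\{\max\text{TFS}, \max\text{NFS}\}$. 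I would also note the elementary bound $\min\{\min\text{TFS},\min\text{NFS}\} \le e \le \max\{\max\text{TFS},\max\text{NFS}\}$, with both inequalities strict whenever $\min\mathcal{FS} < \max\mathcal{FS}$, since $e$ is an average of four values each lying in $[\min\mathcal{FS}, \max\mathcal{FS}]$ and not all four can simultaneously equal $\min\mathcal{FS}$ or $\max\mathcal{FS}$.

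Next I would handle the fallback case $c = e$ (which occurs when $N_{\max} < \gamma$). By the bound just stated, $\min\mathcal{FS} < e < \max\mathcal{FS}$ (strictness using Assumption~\ref{ass:splittable1}), so there is a sample with $\text{FS} < e$, giving $B_1 \neq \emptyset$, and a sample with $\text{FS} \ge e$ (indeed the one attaining $\max\mathcal{FS}$), giving $B_2 \neq \emptyset$. Then I would treat the four ``pure-split'' cases, which require $N_{\max} \ge \gamma \ge 1$ by Assumption~\ref{ass:threshold_gamma1}, so the corresponding $N_k$ is at least $1$. Take $c = \min\text{NFS}$ when $N_1 = N_{\max} \ge 1$: the $N_1 \ge 1$ target samples with $\text{FS} < \min\text{NFS}$ all lie in $B_1$, so $B_1 \neq \emptyset$; and every non-target sample has $\text{FS} \ge \min\text{NFS} = c$, so $B_2 \supseteq \mathcal{S}^{\text{nt}} \neq \emptyset$. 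The case $c = \min\text{TFS}$ with $N_3 = N_{\max} \ge 1$ is symmetric, swapping the roles of the two classes. For $c = \max\text{NFS} + \delta_1$ with $N_2 = N_{\max} \ge 1$: the $N_2 \ge 1$ target samples with $\text{FS} > \max\text{NFS}$ satisfy $\text{FS} \ge \max\text{NFS} + \delta_1 = c$ (invoking the stated property that $\delta_1$ is small enough not to skip over any value of $\mathcal{FS}$ strictly above $\max\text{NFS}$), so $B_2 \neq \emptyset$; and the sample attaining $\max\text{NFS}$ has $\text{FS} = \max\text{NFS} < c$, so $B_1 \neq \emptyset$. The case $c = \max\text{TFS} + \delta_1$ with $N_4 = N_{\max}\ge 1$ is again symmetric. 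Since these five cases exhaust the definition of $c$, the claim follows.

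The main obstacle I anticipate is making the $\delta_1$-based cases airtight: one must pin down the precise meaning of ``infinitesimally small positive $\delta_1$'' so that on the one hand $\max\text{NFS} + \delta_1 > \max\text{NFS}$ (ensuring the sample at $\max\text{NFS}$ lands in $B_1$) and on the other hand the $N_2$ target samples with $\text{FS} > \max\text{NFS}$ still satisfy $\text{FS} \ge \max\text{NFS} + \delta_1$ (ensuring they land in $B_2$). This is exactly the condition flagged in the setup that $\delta_1$ is ``smaller than the next distinct value in $\mathcal{FS}$ if one exists''; I would state it as a standing hypothesis, namely $0 < \delta_1 \le \min\{\,v - \max\text{NFS} : v \in \mathcal{FS},\, v > \max\text{NFS}\,\}$ (and analogously relative to $\max\text{TFS}$), so that no sample value is skipped. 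A secondary subtlety is the degenerate sub-case where, say, $\max\text{NFS} = \max\mathcal{FS}$ so that the set $\{v \in \mathcal{FS} : v > \max\text{NFS}\}$ is empty; but then $N_2 = 0$, so $N_2 = N_{\max} \ge \gamma \ge 1$ is impossible and this branch is never selected — a remark worth including to close the gap. Everything else is routine inequality bookkeeping.
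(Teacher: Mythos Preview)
Your proposal is correct and follows essentially the same case analysis as the paper's proof: the five branches of the definition of $c$ are handled individually, with the $e$-case disposed of via the strict inequality $\min\mathcal{FS} < e < \max\mathcal{FS}$ and the four pure-split cases via the fact that $N_{\max}\ge\gamma\ge 1$ forces at least one sample into the designated pure side while the other class populates the opposite side. Your treatment of the $\delta_1$ subtlety and the degenerate sub-case (where $\max\text{NFS} = \max\mathcal{FS}$ forces $N_2 = 0$) is in fact more explicit than the paper's, which handles $\delta_1$ only with a brief parenthetical remark.
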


\begin{proof}
We prove the theorem by considering two cases based on the value of \(c\). Let \(B_1 = \{\boldsymbol{x}_j \mid \text{FS}(\boldsymbol{x}_j) < c\}\) and \(B_2 = \{\boldsymbol{x}_j \mid \text{FS}(\boldsymbol{x}_j) \geq c\}\). We need to show that \(B_1 \neq \emptyset\) and \(B_2 \neq \emptyset\).

\textbf{Case 1: \( c \neq e \)}
In this case, \(N_{\max} \geq \gamma \geq 1\). This implies that at least one of \(N_1, N_2, N_3, N_4\) is positive. We analyze the sub-cases based on the choice of \(c\):

\begin{itemize}
    \item \textbf{Sub-case 1.1: \(c = \min \text{NFS}\)}.
    Here, \(N_1 = N_{\max} \geq \gamma\). By definition, there are \(N_1 \geq 1\) target samples \(\boldsymbol{x}_j\) such that \(\text{FS}(\boldsymbol{x}_j) < \min \text{NFS} = c\). These samples belong to \(B_1\). Thus, \(B_1 \neq \emptyset\).
    Furthermore, all non-target samples \(\boldsymbol{x}_k\) satisfy \(\text{FS}(\boldsymbol{x}_k) \geq \min \text{NFS} = c\). Since the original block contains at least one non-target sample (Assumption \ref{ass:non_trivial_block1}), these non-target samples belong to \(B_2\). Thus, \(B_2 \neq \emptyset\). ( Refer to the visual illustration in Figure \ref{threshold1} for better intuition.)

    \item \textbf{Sub-case 1.2: \(c = \max \text{NFS} + \delta_1\)}.
    Here, \(N_2 = N_{\max} \geq \gamma\). By definition, there are \(N_2 \geq 1\) target samples \(\boldsymbol{x}_j\) such that \(\text{FS}(\boldsymbol{x}_j) > \max \text{NFS}\). Since \(\delta_1\) is infinitesimally small, \(\text{FS}(\boldsymbol{x}_j) \geq \max \text{NFS} + \delta_1 = c\) holds for these samples (assuming no \(\text{FS}(\boldsymbol{x}_j)\) falls exactly at \(\max \text{NFS}\); if it does, the strict inequality \(>\) in the definition of \(N_2\) applies. The use of \(\delta_1\) ensures the split occurs correctly even with potential ties if handled properly, but the core idea is that these \(N_2\) samples end up in \(B_2\)). Thus, \(B_2 \neq \emptyset\).
    All non-target samples \(\boldsymbol{x}_k\) satisfy \(\text{FS}(\boldsymbol{x}_k) \leq \max \text{NFS} < \max \text{NFS} + \delta_1 = c\). Since the block contains at least one non-target sample, these samples belong to \(B_1\). Thus, \(B_1 \neq \emptyset\). ( Refer to the visual illustration in Figure \ref{threshold2} for better intuition.)

    \item \textbf{Sub-case 1.3: \(c = \min \text{TFS}\)}.
    Here, \(N_3 = N_{\max} \geq \gamma\). By definition, there are \(N_3 \geq 1\) non-target samples \(\boldsymbol{x}_j\) such that \(\text{FS}(\boldsymbol{x}_j) < \min \text{TFS} = c\). These samples belong to \(B_1\). Thus, \(B_1 \neq \emptyset\).
    All target samples \(\boldsymbol{x}_k\) satisfy \(\text{FS}(\boldsymbol{x}_k) \geq \min \text{TFS} = c\). Since the original block contains at least one target sample (Assumption \ref{ass:non_trivial_block1}), these target samples belong to \(B_2\). Thus, \(B_2 \neq \emptyset\). ( Refer to the visual illustration in Figure \ref{threshold3} for better intuition.)

    \item \textbf{Sub-case 1.4: \(c = \max \text{TFS} + \delta_1\)}.
    Here, \(N_4 = N_{\max} \geq \gamma\). By definition, there are \(N_4 \geq 1\) non-target samples \(\boldsymbol{x}_j\) such that \(\text{FS}(\boldsymbol{x}_j) > \max \text{TFS}\). As in Sub-case 1.2, these samples satisfy \(\text{FS}(\boldsymbol{x}_j) \geq \max \text{TFS} + \delta_1 = c\) and belong to \(B_2\). Thus, \(B_2 \neq \emptyset\).
    All target samples \(\boldsymbol{x}_k\) satisfy \(\text{FS}(\boldsymbol{x}_k) \leq \max \text{TFS} < \max \text{TFS} + \delta_1 = c\). Since the block contains at least one target sample, these samples belong to \(B_1\). Thus, \(B_1 \neq \emptyset\). ( Refer to the visual illustration in Figure \ref{threshold4} for better intuition.)
\end{itemize}
In all sub-cases where \(c \neq e\), both \(B_1\) and \(B_2\) are non-empty.

\textbf{Case 2: \( c = e \)}
This case occurs when \(N_{\max} < \gamma\). The threshold \(c = e\) is the average of the four extreme feature-weighted sum values (\(\min \text{NFS}, \max \text{NFS}, \min \text{TFS}, \max \text{TFS}\)). Let \(\text{FS}_{\min} = \min(\mathcal{FS})\) and \(\text{FS}_{\max} = \max(\mathcal{FS})\). By definition, \(\min \text{NFS} \geq \text{FS}_{\min}\), \(\max \text{NFS} \leq \text{FS}_{\max}\), \(\min \text{TFS} \geq \text{FS}_{\min}\), and \(\max \text{TFS} \leq \text{FS}_{\max}\). Therefore,
\[
\text{FS}_{\min} \leq \frac{\text{FS}_{\min} + \text{FS}_{\min} + \text{FS}_{\min} + \text{FS}_{\min}}{4} \leq e \leq \frac{\text{FS}_{\max} + \text{FS}_{\max} + \text{FS}_{\max} + \text{FS}_{\max}}{4} = \text{FS}_{\max}.
\]
So, \(e\) lies within the range \([\text{FS}_{\min}, \text{FS}_{\max}]\).
By Assumption \ref{ass:splittable1}, there exist at least two samples with distinct feature-weighted sums. Thus, \(\text{FS}_{\min} < \text{FS}_{\max}\) (since \(n \geq 2\)).
If all \(\text{FS}(\boldsymbol{x}_j)\) were equal, the block would not be splittable, contradicting Assumption \ref{ass:splittable1}.
Therefore, there must be at least one sample \(\boldsymbol{x}_k\) such that \(\text{FS}(\boldsymbol{x}_k) = \text{FS}_{\min}\) and at least one sample \(\boldsymbol{x}_l\) such that \(\text{FS}(\boldsymbol{x}_l) = \text{FS}_{\max}\).

Can \(c = e\) be equal to \(\text{FS}_{\min}\) or \(\text{FS}_{\max}\)?
If \(e = \text{FS}_{\min}\), then \(\min \text{NFS} = \max \text{NFS} = \min \text{TFS} = \max \text{TFS} = \text{FS}_{\min}\). This implies all samples have the same feature-weighted sum \(\text{FS}_{\min}\), contradicting Assumption \ref{ass:splittable1}.
Similarly, \(e = \text{FS}_{\max}\) leads to a contradiction.
Therefore, \(\text{FS}_{\min} < e < \text{FS}_{\max}\).

Since \(\text{FS}_{\min} < e\), the sample(s) \(\boldsymbol{x}_k\) with \(\text{FS}(\boldsymbol{x}_k) = \text{FS}_{\min}\) satisfy \(\text{FS}(\boldsymbol{x}_k) < e\). Thus, \(B_1 \neq \emptyset\).
Since \(e < \text{FS}_{\max}\), the sample(s) \(\boldsymbol{x}_l\) with \(\text{FS}(\boldsymbol{x}_l) = \text{FS}_{\max}\) satisfy \(\text{FS}(\boldsymbol{x}_l) > e\), which implies \(\text{FS}(\boldsymbol{x}_l) \geq e\). Thus, \(B_2 \neq \emptyset\).

In both Case 1 and Case 2, the splitting process results in two non-empty subblocks \(B_1\) and \(B_2\). Since each split reduces the size of the block being considered (as \(|B_1| \geq 1\), \(|B_2| \geq 1\), and \(|B_1| + |B_2| = n\)), and the process stops when blocks meet certain criteria (e.g., size or purity), the overall block splitting process is guaranteed to converge, i.e., given a finite number of initial samples, the overall block splitting process is guaranteed to terminate within a finite number of steps, thereby eliminating the possibility of infinite loops.
\end{proof}

\newcommand{\Cov}{\text{Cov}} 
\newcommand{\Var}{\text{Var}} 

\section{Computational Complexity Analysis}
\label{sec:appendix_complexity_proof}

We analyze the time complexity of constructing the LHT.

\begin{theorem}
Given a dataset with \(n\) samples and \(m\) features, the time complexity required to build an LHT up to a depth \(d\) is \(O(mnd)\).
\end{theorem}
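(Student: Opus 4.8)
The plan is to bound the work done at a single node and then sum over all nodes in the tree. First I would argue that at any node holding $n'$ samples (each with $m$ features), every operation in Algorithm~\ref{alg:lht_build_appendix} is linear in the node's data size, i.e. $O(mn')$. Concretely: computing the per-feature means $\mathbb{E}[X_i]$ and second moments $\mathbb{E}[X_i^2]$ needed for the variance filter is a single pass over the $n'\times m$ entries, hence $O(mn')$; computing $\mathbb{E}[X_i^{\text{t}}]$, $\mathbb{E}[X_i^{\text{nt}}]$, the separation degrees $\text{SD}_i$ from \eqref{SD}, the maximum $\overline{\text{SD}}$ from \eqref{absSD}, and the weights $w_i$ from \eqref{feature_weight} are all $O(mn')$ or $O(m)$; forming the feature-weighted sums $\text{FS}(\boldsymbol{x}_j)=\sum_i w_i x_{ij}$ for all $j$ is $O(mn')$; extracting $\min/\max$ of $\mathcal{FS}^{\text{t}}$ and $\mathcal{FS}^{\text{nt}}$, the five candidates for $c$, and the counts $N_1,\dots,N_4$ are each $O(n')$; selecting $c$ via \eqref{c} is $O(1)$; and partitioning the samples into $B_1,B_2$ according to the sign of $\text{FS}(\boldsymbol{x}_j)-c$ is $O(n')$ (or $O(mn')$ if one also physically moves the feature matrix). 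The membership-function fit at leaves is $O(mn')$ as noted in the paper, so it does not change the bound. Thus the cost at a node with $n'$ samples is $\Theta(mn')$ up to constants.

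Next I would sum this over a level of the tree. The key structural fact, supplied by Theorem~\ref{thm:convergence} (the Splitting Guarantee), is that the two subblocks $B_1,B_2$ produced by any split are disjoint and satisfy $|B_1|+|B_2|=n'$ with both nonempty; hence the sample sets at the nodes of any fixed depth $\ell$ form a partition of (a subset of) the original $n$ samples. Therefore $\sum_{\text{nodes } v \text{ at depth }\ell} n'_v \le n$, and the total work at level $\ell$ is $\sum_v O(m\, n'_v) = O(m n)$. Summing over the $d$ levels $\ell = 0,1,\dots,d-1$ gives total construction time $O(mnd)$, which is the claim. I would also remark that leaf nodes terminate the recursion and only incur the $O(mn'_v)$ stopping-condition check plus the leaf fit, so they are absorbed into the same per-level bound.

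The only genuinely delicate point — the main obstacle — is justifying that the per-level sample counts sum to at most $n$. This needs the partition property, and in particular that no sample is duplicated into both children and that empty children are never created (which would otherwise still be fine for the bound but matters for the termination/finite-depth statement). Theorem~\ref{thm:convergence} already delivers exactly $B_1 \cap B_2 = \emptyset$, $B_1 \cup B_2$ equal to the node's sample set, and nonemptiness, so this reduces to quoting that result; I would state the partition property as a short lemma or inline observation and cite the appendix proof. A secondary subtlety is whether to charge $O(n')$ or $O(mn')$ for the data-splitting step depending on whether feature vectors are copied; since $O(mn') \ge O(n')$ this is harmless and I would simply use the $O(mn')$ bound throughout to keep the accounting uniform. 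Everything else is routine bookkeeping, so I would keep the write-up terse: one paragraph on the per-node $O(mn')$ bound, one on the per-level summation via the partition property, and a one-line conclusion multiplying by $d$.
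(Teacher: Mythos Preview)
Your proposal is correct and follows essentially the same route as the paper: bound the per-node work by $O(m\,n')$, observe that the nodes at any fixed depth partition (a subset of) the $n$ samples so that the per-level cost is $O(mn)$, and multiply by the $d$ levels. Your write-up is, if anything, slightly more careful than the paper's in explicitly invoking Theorem~\ref{thm:convergence} to justify the disjoint-partition property and in flagging the $O(n')$ versus $O(mn')$ accounting for the physical split, but these are expository refinements rather than a different argument.
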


\begin{proof}
The construction of the LHT involves recursively partitioning the data at each node. Let's analyze the computational cost at a single node and then aggregate it over the tree structure.

Consider a node \(N\) containing \(n_{\text{node}}\) samples. The computations performed at this node before splitting are:

\begin{enumerate}
    \item \textbf{Feature Selection (Expectation/Weight Calculation):}
        This step involves calculating statistics for each feature based on the \(n_{\text{node}}\) samples within the current node to determine the splitting hyperplane. This includes calculating the mean (\(\mathbb{E}[X_i]\), \(\mathbb{E}[X_i^2]\), \(\mathbb{E}[X_i^{\text{t}}]\), \(\mathbb{E}[X_i^{\text{nt}}]\)) for each feature, the difference between target and non-target means (\(\text{SD}_i\)), and the feature weights (\(w_i\)). For each feature, calculating these statistics requires iterating through the \(n_{\text{node}}\) samples.  Therefore, calculating the required statistics for *each* of the *m* features requires \(O(m \cdot n_{\text{node}})\) time.  Finding the maximum absolute $SD_i$ (i.e.  $\overline{SD}$) and calculating the weights $w_i$ takes O(m) time. Thus, the feature selection step requires a total time of  \(O(m \cdot n_{\text{node}})\).

    \item \textbf{Feature-Weighted Sum Calculation:}
        For each of the \(n_{\text{node}}\) samples, we calculate the feature-weighted sum (\(\text{FS}(\boldsymbol{x})\)). This requires iterating through its \(m\) features, involving \(m\) multiplications and \(m-1\) additions.  Thus, each feature-weighted sum calculation takes \(O(m)\) time. Since there are \(n_{\text{node}}\) samples at the node, the total time is \(O(m \cdot n_{\text{node}})\).

    \item \textbf{Hyperplane Parameter (c) Selection:} This step involves calculating \(N_1, N_2, N_3, N_4\), \(e\) and determining the best value for the hyperplane parameter \(c\). This requires computing FS(\(\boldsymbol{x}_j\)) for each of the \(n_{\text{node}}\) samples and comparing them to \(\min \text{NFS}\), \(\max \text{NFS}\), \(\min \text{TFS}\), and \(\max \text{TFS}\). This comparison and counting takes \(O(n_{\text{node}})\) time for each of the four \(N_i\) values, so the total time is \(O(n_{\text{node}})\).  Calculating \(e\) from  \(\min \text{NFS}\), \(\max \text{NFS}\), \(\min \text{TFS}\), and \(\max \text{TFS}\) takes \(O(1)\) time. Selecting the appropriate \(c\) also takes constant time. So, the time for parameter selection is  \(O(n_{\text{node}})\).

    \item \textbf{Data Partitioning:}
        After selecting the hyperplane, we need to partition the \(n_{\text{node}}\) samples into two sub-nodes based on the hyperplane equation \(y(\boldsymbol{x}) = \text{FS}(\boldsymbol{x}) - c = 0\). For each sample, evaluating the condition \(y(\boldsymbol{x}) < 0\) takes \(O(1)\) time (since \(\text{FS}(\boldsymbol{x})\) has already been computed). Therefore, partitioning all \(n_{\text{node}}\) samples takes \(O(n_{\text{node}})\) time.

\end{enumerate}

Combining these steps, the total work performed at a single node is dominated by the feature selection and feature-weighted sum calculations, each resulting in a complexity of \(O(m \cdot n_{\text{node}})\) per node. Therefore, the overall complexity at a single node is \(O(m \cdot n_{\text{node}})\).

Now, let's consider the complexity across the entire tree up to depth \(d\). A common way to analyze tree algorithms is level by level. At any given level \(k\) (where \(0 \le k < d\)), let the nodes be \(N_{k,1}, N_{k,2}, \ldots\). Let \(n_{k,i}\) be the number of samples in node \(N_{k,i}\). The crucial observation is that each sample from the original dataset belongs to exactly one node at level \(k\). Therefore, the total number of samples across all nodes at level \(k\) is \(\sum_{i} n_{k,i} \le n\). Note that strict inequality can occur if some nodes are leaf nodes, and their processing ends.

The total work performed across all nodes at level \(k\) is the sum of the work done at each node:
\[
\sum_{i} O(m \cdot n_{k,i}) = O\left(m \sum_{i} n_{k,i}\right) \le O(m n).
\]

This means that the total computational cost for processing all samples across one entire level of the tree is bounded by \(O(mn)\).

Since the tree is built up to a depth \(d\), there are \(d\) such levels (from level 0 to level \(d-1\)) where these computations are performed. Therefore, the total time complexity for building the LHT is the cost per level multiplied by the number of levels:

\[
d \times O(mn) = O(mnd).
\]

This completes the proof.
\end{proof}

\section{Derivation Details of Membership Functions}
\label{ap1}

We are given a labeled dataset with $n$ samples. The target vector is $P=[p_1, \dots, p_n]^\top$, where $p_i \in \{0, 1\}$. We have $m$ features, and the data for the $i$-th feature is represented by the vector $X_i = [x_{1i}, \dots, x_{ni}]^\top$. The objective is to construct a function $\mu(\bm{x})$ that outputs a membership degree in the range [0, 1] for a new input sample $\bm{x}=[x_1, \dots, x_m]$.

\textcolor{black}{To simplify computation, we adopt an approach based on univariate linear regression to construct the membership function, rather than performing a full multivariate regression. This method analyzes each feature independently and combines the results to form a multivariate prediction model, offering a computationally efficient approximation.}

First, we determine coefficients based on univariate features. We analyze the linear relationship between each feature $X_i$ and the target $P$ separately. For each feature $X_i$ ($i=1, \dots, m$), we perform a linear regression. The goal is to find the slope $a_i$ and intercept $c_i$ that minimize the sum of squared errors between the observed target values $p_k$ and the values predicted by the linear model $a_i x_{ki} + c_i$. This involves solving the following optimization problem:
\[
    \min_{a_i, c_i} L(a_i, c_i) = \min_{a_i, c_i} \sum_{k=1}^n (p_k - (a_i x_{ki} + c_i))^2
\]
According to the principle of ordinary least squares (OLS), the optimal slope $a_i^*$ that minimizes this objective function is given by the ratio of the sample covariance between the feature and the target to the sample variance of the feature:
\begin{equation}
    a_i^* = \frac{\Cov(X_i, P)}{\Var(X_i)} = \frac{\sum_{k=1}^n (x_{ki} - \E[X_i])(p_k - \E[P])}{\sum_{k=1}^n (x_{ki} - \E[X_i])^2}
    \label{eq:ai_individual_fit_ols}
\end{equation}
Here, $\E[\cdot]$ denotes the sample mean (e.g., $\E[X_i] = \frac{1}{n}\sum_k x_{ki}$), $\Cov(X_i, P)$ is the sample covariance between $X_i$ and $P$, and $\Var(X_i)$ is the sample variance of $X_i$.  This step yields a coefficient $a_i^*$ for each feature, determined independently based on its univariate linear relationship with the target.

Next, we combine these feature contributions and determine a global intercept. We use the individually determined coefficients $a_1^*, \dots, a_m^*$ to form a combined linear predictor for a multi-feature input $\bm{x}$:
\[
    \hat{p}(\bm{x}) = a_1^* x_1 + a_2^* x_2 + \dots + a_m^* x_m + b^*
\]
where $b^*$ is the intercept term yet to be determined. We set $b^*$ by imposing a global condition: the average prediction of the model over the training data must equal the average value of the target labels in the training data. That is, we require $\E[\hat{P}] = \E[P]$, where $\hat{P}$ is the vector of predictions for the training samples. This leads to:
\[
    \E[\hat{P}] = \E\left[ \sum_{i=1}^m a_i^* X_i + b^* \right] = \sum_{i=1}^m a_i^* \E[X_i] + b^*
\]
Setting $\E[\hat{P}] = \E[P]$ and solving for $b^*$:
\begin{equation}
    b^* = \E[P] - \sum_{i=1}^m a_i^* \E[X_i] \label{eq:b_star_mean_align_ols}
\end{equation}
This choice of $b^*$ ensures that the overall level of the model's predictions matches the average level of the target data.

Substituting the expression for $b^*$ back into the prediction model yields the final prediction model for a new sample $\bm{x} = [x_1, \dots, x_m]$:
\[
    \hat{p}(\bm{x}) = \sum_{i=1}^m a_i^* x_i + \left( \E[P] - \sum_{j=1}^m a_j^* \E[X_j] \right)
\]
This can be rewritten in the centered form:
\begin{equation}
  \hat{p}(\bm{x}) = \sum_{i=1}^m a_i^* (x_i - \E[X_i]) + \E[P] \label{eq:p_hat_centered_combined_ols}
\end{equation}
where $a_i^*$ is computed using \eqref{eq:ai_individual_fit_ols}, and $\E[X_i]$, $\E[P]$ are the sample means from the training data.

Finally, we generate the membership degree. Since the linear prediction $\hat{p}(\bm{x})$ can fall outside the [0, 1] range required for a membership degree, we clip it to this interval to obtain the final membership function $\mu(\bm{x})$:
\begin{equation}
    \mu(\bm{x}) = \max\{0, \min\{\hat{p}(\bm{x}), 1\}\} \label{eq:membership_clipped_combined_ols}
\end{equation}
This $\mu(\bm{x})$ represents the constructed membership degree for the input sample $\bm{x}$.

\section{Note on Computational Complexity of Membership Functions}
\label{note}
To clarify the computation of membership functions within the leaf blocks and its impact on the overall complexity, we provide the following detailed explanation:

\begin{itemize}
    \item \textbf{Computation Location:} membership functions are computed within the leaf blocks of the LHT. Each leaf block independently constructs its own Membership Function based on its contained samples using least squares fitting.
    
    \item \textbf{Complexity for a Single Leaf Block:} For a leaf block containing \(n_{\text{leaf}}\) samples, the time complexity to compute its Membership Function is \(O(m \cdot n_{\text{leaf}})\), where \(m\) is the number of features. Specifically:
    \begin{itemize}
        \item Calculating the coefficient \(a_i^* = \frac{\Cov(X_i, P)}{\Var(X_i)}\) for each feature \(i\) requires \(O(n_{\text{leaf}})\) time, including computations for sample means (\(\E[X_i]\), \(\E[P]\)), covariance (\(\Cov(X_i, P)\)), and variance (\(\Var(X_i)\)).
        \item Performing this for all \(m\) features results in a total time of \(O(m \cdot n_{\text{leaf}})\).
        \item Computing the global intercept \(b^* = \E[P] - \sum_{i=1}^m a_i^* \E[X_i]\) takes \(O(m)\) time, which is negligible compared to \(O(m \cdot n_{\text{leaf}})\).
    \end{itemize}
    
    \item \textbf{Total Complexity for All Leaf Blocks:} Suppose the LHT has \(L\) leaf blocks, with each leaf block containing \(n_1, n_2, \dots, n_L\) samples, respectively, and \(\sum_{i=1}^L n_i = n\) (since each sample belongs to exactly one leaf block). The total time complexity for constructing membership functions across all leaf blocks is:
    \[
    O\left( m \cdot \sum_{i=1}^L n_i \right) = O(m \cdot n)
    \]
    
    \item \textbf{Impact on Overall Complexity:} The construction complexity of the LHT is \(O(mnd)\), where \(d\) is the maximum depth of the tree, as established in Appendix \ref{sec:appendix_complexity_proof}. Typically, \(d \geq 1\) (e.g., \(d = O(\log n)\)), so \(O(mnd) \geq O(mn)\). The computation of membership functions in the leaf blocks, performed as the final step, incurs an additional cost of \(O(mn)\), which is subsumed by \(O(mnd)\). Thus, the overall time complexity remains \(O(mnd)\), consistent with the original analysis.
\end{itemize}

 Therefore, the computation of membership functions within the leaf blocks does not alter the overall complexity of \(O(mnd)\).

\section{ Universal Approximation Capability of LHTs}
\label{sec:universal_approximation}
We prove that LHTs can universally approximate continuous functions on a compact set \( K \subset \mathbb{R}^m \).

\begin{theorem}
Let \( K \subset \mathbb{R}^m \) be a compact set and \( g: K \to [0,1] \) be a continuous function. Then, for any \( \epsilon > 0 \), there exists an LHT-defined function \( f_{\text{LHT}}: K \to [0,1] \) such that:
\[ \sup_{\mathbf{x} \in K} |f_{\text{LHT}}(\mathbf{x}) - g(\mathbf{x})| < \epsilon \]
\end{theorem}

\begin{proof}
Since \( K \) is compact, \( g \) is uniformly continuous on \( K \). Thus, for any \( \epsilon > 0 \), there exists \( \delta > 0 \) such that for all \( \mathbf{x}, \mathbf{y} \in K \), if \( \|\mathbf{x} - \mathbf{y}\| < \delta \), then:
\[ |g(\mathbf{x}) - g(\mathbf{y})| < \frac{\epsilon}{2} \]

Our proof constructs an LHT by recursively partitioning \( K \) into leaf regions, controlling the oscillation of \( g \) within each region. Specifically, to establish universal approximation, i.e., \(\sup_{\mathbf{x} \in K} |f_{\text{LHT}}(\mathbf{x}) - g(\mathbf{x})| < \epsilon\), our LHT construction ensures that the function \(g\) has an oscillation (i.e., \(\sup_{\mathbf{x}, \mathbf{y} \in R} |g(\mathbf{x}) - g(\mathbf{y})|\)) of less than \(\epsilon/2\) within each leaf region.
By the uniform continuity of \(g\) on the compact set \(K\), this oscillation condition is met if the diameter of every leaf region is reduced below a threshold \(\delta\).
Thus, our proof employs a conservative yet provable strategy: we show that LHTs can refine regions until all leaf diameters are less than \(\delta\), ensuring the required local approximation quality for \( g \).

Let \( D \) be the diameter of \( K \). In the splitting process of LHT, each hyperplane split divides a region into two subregions, reducing the diameter by a factor \(\lambda_i\) at the \(i\)-th split, where \(0 < \lambda_i < 1\). Since \(\lambda_i\) may vary depending on the hyperplane's position, we define \(\lambda_{\text{max}} = \max_i \lambda_i\) as the largest reduction factor across all possible splits. 
In the worst case, after \(d\) splits, the diameter of a leaf region is at most \(D \lambda_{\text{max}}^d\). To ensure this is less than \(\delta\), we require:
\[ D \lambda_{\text{max}}^d < \delta \implies d > \frac{\log \left( \frac{D}{\delta} \right)}{\log \left( \frac{1}{\lambda_{\text{max}}} \right)} \]
Taking \( d = \left\lceil \frac{\log \left( \frac{D}{\delta} \right)}{\log \left( \frac{1}{\lambda_{\text{max}}} \right)} \right\rceil + 1 \) ensures all leaf regions have diameters less than \( \delta \).

For each leaf region \( R \), select a set of points \( \{\mathbf{x}_i\}_{i=1}^n \subset R \cap K \), and use least squares to fit an affine function \( L_R(\mathbf{x}) = \mathbf{a}_R^T \mathbf{x} + b_R \) that approximates \( g(\mathbf{x}_i) \). 

Since \( \text{diam}(R) < \delta \), for any \( \mathbf{x}, \mathbf{y} \in R \cap K \), we have \( \|\mathbf{x} - \mathbf{y}\| < \delta \), which by uniform continuity implies \( |g(\mathbf{x}) - g(\mathbf{y})| < \epsilon/2 \). This means the oscillation of \( g \) within the small region \( R \cap K \) is less than \( \epsilon/2 \). This local near-constancy allows \( g \) to be well approximated by an affine function within \( R \cap K \).
Specifically, there exists an affine function \( L_R(\mathbf{x}) = \mathbf{a}_R^T \mathbf{x} + b_R \) such that for all \( \mathbf{x} \in R \cap K \):
\[ |L_R(\mathbf{x}) - g(\mathbf{x})| < \frac{\epsilon}{2} \]
Indeed, even a simple constant function, e.g., \( L_R(\mathbf{x}) = g(\mathbf{x}_R) \) for some fixed \( \mathbf{x}_R \in R \cap K \), would satisfy \( |L_R(\mathbf{x}) - g(\mathbf{x})| = |g(\mathbf{x}_R) - g(\mathbf{x})| < \epsilon/2 \) due to the small oscillation. An affine function via local least squares fitting, offering more degrees of freedom, can certainly achieve this required bound.

Define the LHT function in each leaf region \( R \) as:
\[ f_{\text{LHT}}(\mathbf{x})  = \max\{0, \min\{L_R(\mathbf{x}), 1\}\} \]

For any \( \mathbf{x} \in K \), let \( R \) be the leaf region containing \( \mathbf{x} \). Since \( |L_R(\mathbf{x}) - g(\mathbf{x})| < \frac{\epsilon}{2} \), i.e., \( L_R(\mathbf{x}) - \frac{\epsilon}{2}<g(\mathbf{x}) < L_R(\mathbf{x}) + \frac{\epsilon}{2}  \), and \( g(\mathbf{x}) \in [0,1] \), we analyze:
\begin{itemize}
    \item If \( 0 \leq L_R(\mathbf{x}) \leq 1 \), then \( f_{\text{LHT}}(\mathbf{x}) = L_R(\mathbf{x}) \), so \( |f_{\text{LHT}}(\mathbf{x}) - g(\mathbf{x})| < \frac{\epsilon}{2} \).
    \item If \( L_R(\mathbf{x}) < 0 \), then \(f_{\text{LHT}}(\mathbf{x}) = 0 \), and since \( g(\mathbf{x}) \geq 0 \) and \( g(\mathbf{x}) < L_R(\mathbf{x}) + \frac{\epsilon}{2}<\frac{\epsilon}{2} \), \( |f_{\text{LHT}}(\mathbf{x}) - g(\mathbf{x})| < \frac{\epsilon}{2} \).
    \item If \( L_R(\mathbf{x}) > 1 \), then \( f_{\text{LHT}}(\mathbf{x}) = 1 \), and since \( g(\mathbf{x}) \leq 1 \) and \( 1-g(\mathbf{x}) < 1-L_R(\mathbf{x}) + \frac{\epsilon}{2}< \frac{\epsilon}{2} \), \( |f_{\text{LHT}}(\mathbf{x}) - g(\mathbf{x})| < \frac{\epsilon}{2} \).
\end{itemize}
Thus, \( |f_{\text{LHT}}(\mathbf{x}) - g(\mathbf{x})| < \frac{\epsilon}{2} < \epsilon \) for all \( \mathbf{x} \in K \), implying:
\[ \sup_{\mathbf{x} \in K} |f_{\text{LHT}}(\mathbf{x}) - g(\mathbf{x})| < \epsilon \]
This establishes the universal approximation capability of LHTs.
\end{proof}

\section{Feature Importance of the LHT Branching Blocks in the Wine Dataset}
\label{ap2}

The Wine dataset comprises 178 samples, each characterized by 13 features representing chemical and physical properties of wines. These features are: Alcohol, Malic acid, Ash, Alcalinity of ash, Magnesium, Total phenols, Flavanoids, Nonflavanoid phenols, Proanthocyanins, Color intensity, Hue, OD280/OD315 of diluted wines, and Proline. This dataset poses a three-class classification problem, making it suitable for evaluating the feature importance of the LHT in multi-class settings.

To illustrate the contribution of each feature in the LHT branch, we use the Wine dataset as an example. 
80\% of the data is used for training, and 20\% is used for testing. For each class, a separate LHT is trained, resulting in a total of three LHTs for classification. The results of the LHT classification, under the condition of fixed $\alpha=0$ and varying $\beta$, are shown in Table \ref{sample-table1}.
The value of $\beta$ can influence the growth of LHT in different ways. For the Wine dataset, selecting fewer but more discriminative features (i.e., using a larger $\beta$) does not necessarily lead to a decrease in test accuracy. 
\begin{table}[h]
	\caption{Classification accuracy of the LHT on the Wine dataset for varying $\beta$.}
	\label{sample-table1}
	\centering
	\begin{center}
				\begin{tabular}{{@{}c@{\hspace{10pt}}c@{\hspace{10pt}}c@{\hspace{10pt}}c@{\hspace{10pt}}c@{\hspace{10pt}}c@{\hspace{10pt}}c@{}}}
					\toprule
					$\beta=0$ & $\beta=0.25$& $\beta=0.5$& $\beta=0.75$& $\beta=1$\\
					\midrule
				    97.6 $\pm$ 2.8 & 97.8 $\pm$ 1.6 &96.8 $\pm$ 2.4&93.0 $\pm$ 3.2&94.1 $\pm$ 4.2\\
					\bottomrule
				\end{tabular}
	\end{center}
\end{table}

The LHT structures of the three classes in the Wine dataset are shown in Figure \ref{LHT_s0}, with the left side corresponding to the case where $\beta=0$, and the right side to the case where $\beta=0.25$.
For class 0, the LHT structures are the same for both $\beta=0$ and $\beta=0.25$. For class 1, the number of blocks decreases when $\beta=0.25$, while for class 2, the number of blocks increases when $\beta=0.25$.

\begin{figure*}[ht]
\begin{center}
\centerline{\includegraphics[width=0.9\textwidth]{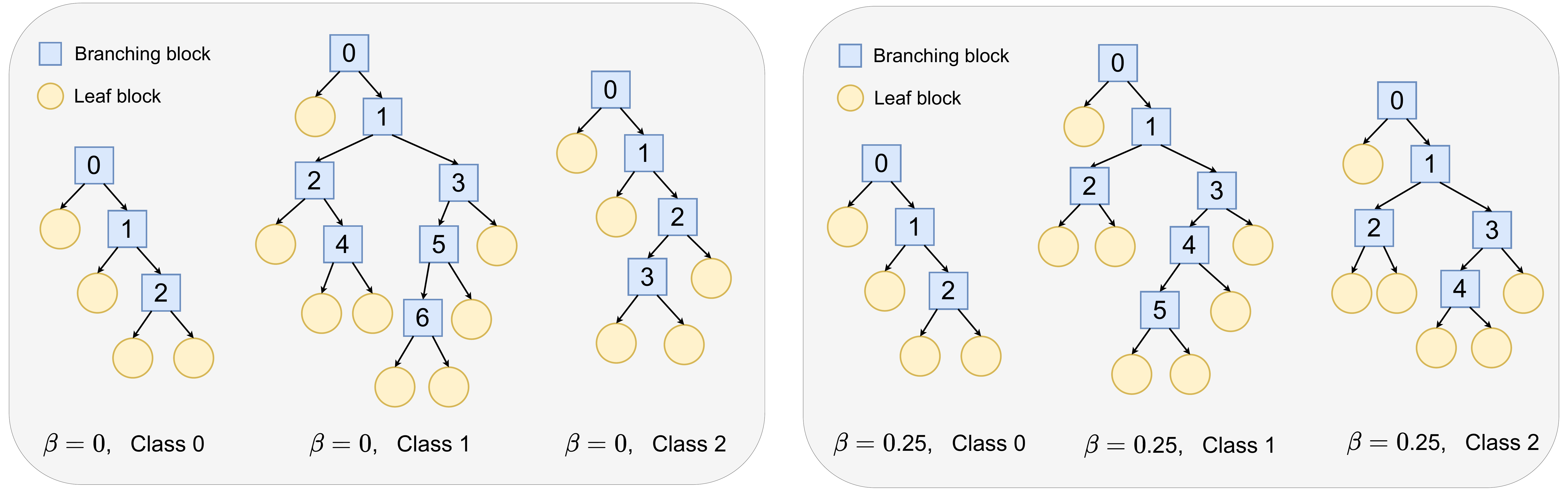}}
\caption{The LHT structures of the three classes in the Wine dataset are shown, with the left side corresponding to the case where $\beta=0$, and the right side to the case where $\beta=0.25$.}
\label{LHT_s0}
\end{center}
\end{figure*}
\textcolor{black}{
LHT's interpretability stems from its transparent structure and statistically-driven feature weights (\(w_i\)), which define splitting hyperplanes.}
Figure \ref{LHT_f14} ($\beta=0$) and \ref{LHT_02514} ($\beta=0.25$) demonstrate the contribution of each feature to the classification of the branching blocks. In the split of branching block 0 for class 0, feature 12 (Proline) is the most influential. For branching block 1 for class 0, feature 0 (Alcohol) plays the most significant role, whereas feature 2 (Ash) dominates in the split of branching block 2.
\textcolor{black}{Such insights enable domain experts to validate model decisions or optimize wine formulations.}
The situations for class 1 and class 2 can be analyzed using similar visualizations.
Due to the transparency of the LHT structure, the contribution of each feature at every branching step is clearly visible.

\begin{figure*}[htbp]
\begin{center}
\centerline{\includegraphics[width=0.9\textwidth]{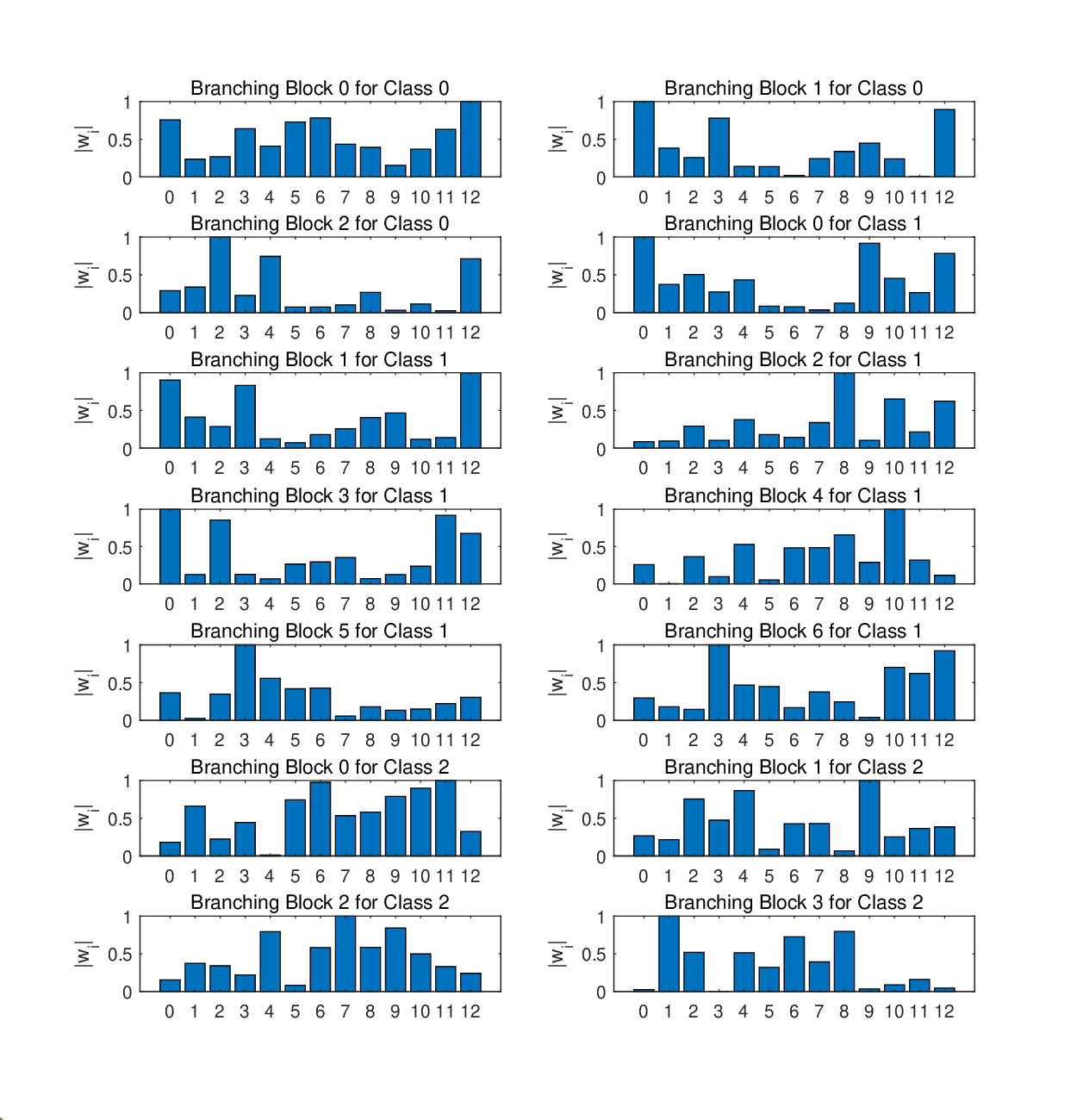}}
\caption{Visualization of the feature weights for each branching block of the three LHTs corresponding to the three classes in the Wine dataset ($\beta=0$).}
\label{LHT_f14}
\end{center}
\end{figure*}
\clearpage

\newpage
\begin{figure*}[!htbp]
\begin{center}
\centerline{\includegraphics[width=0.9\textwidth]{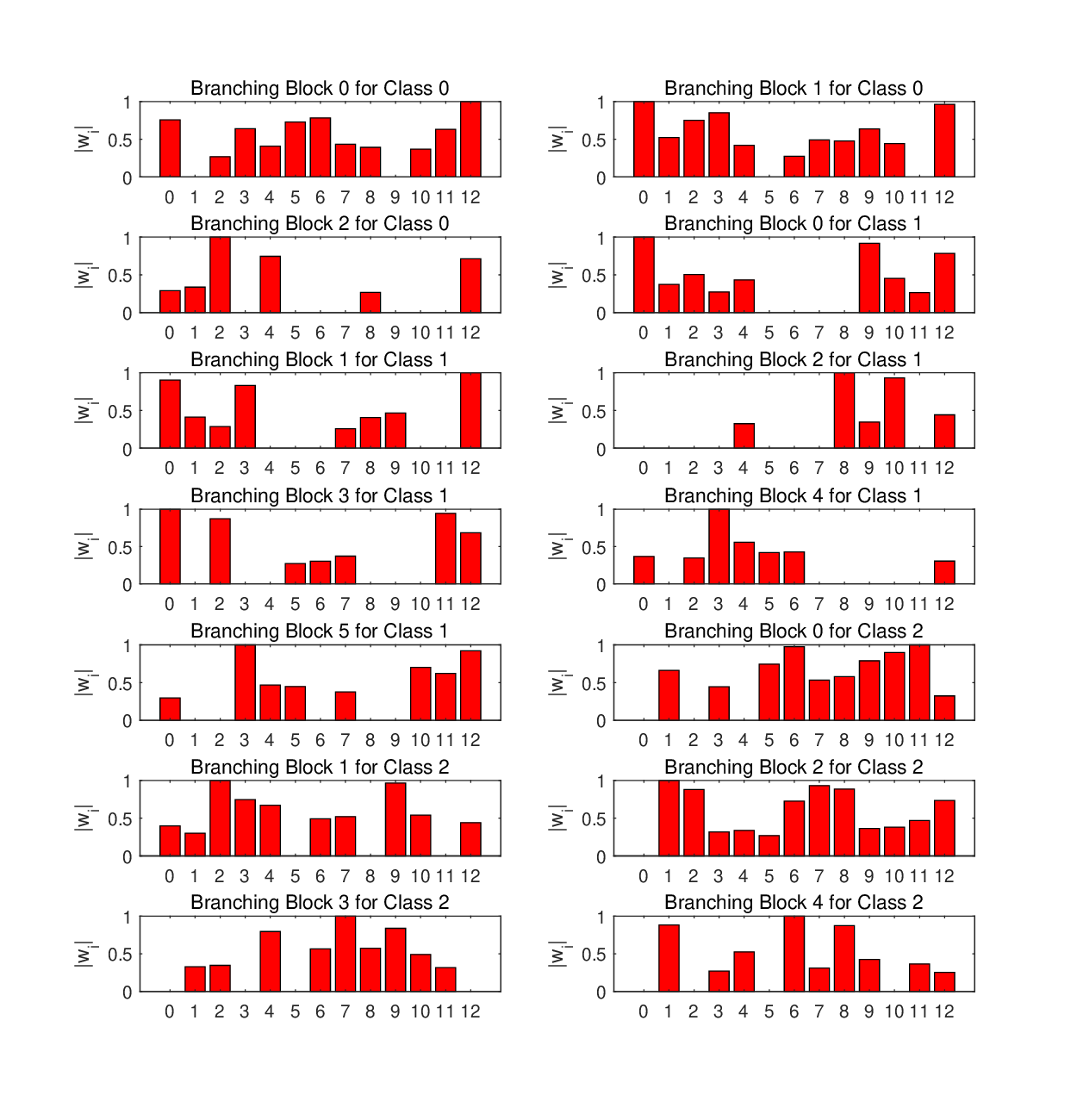}}
\caption{Visualization of the feature weights for each branching block of the three LHTs corresponding to the three classes in the Wine dataset ($\beta=0.25$).}
\label{LHT_02514}
\end{center}
\end{figure*}

\clearpage

\newpage

\section{Hyperparameters of the Experiment and Dataset Information}
\label{hyperpara}
We provide the hyperparameters for the experimental setup here.
Table \ref{h2} provides the hyperparameter information for LHT across 30 test datasets. For the Wine dataset, $\beta$ is chosen to be 0.25. For other datasets where $\beta^{\prime}$ values are specified, the LH forest method is used. The forest rate refers to the proportion of training samples randomly selected to construct the LHT. For more details about LH forest, refer to Section \ref{LHf}.
In addition, except for the MNIST dataset, $\alpha$ is set to 0 for all other datasets, and all datasets have been normalized. We used the 784-dimensional MNIST dataset, which was not normalized, with $\alpha$ set to 900. All experiments have the same \text{min\_samples} and $\gamma$ values.

Table \ref{hyperparametre} presents the hyperparameter settings for RF, XGBoost, CatBoost, and LightGBM. The hyperparameters were manually selected across multiple datasets, taking into account dataset-specific characteristics such as sample size and feature dimensionality. We explored values for max depth (3–15), learning rate (0.01–0.1), and the number of estimators (20–400), and selected the final configuration based on the highest accuracy obtained through five-fold cross-validation on the training data.

All the datasets used for testing are from UCI\footnote{\url{https://archive.ics.uci.edu/datasets }} and LIBSVM\footnote{\url{https://www.csie.ntu.edu.tw/~cjlin/libsvmtools/datasets/}}. For datasets with predefined training and testing splits, we adhere to the provided splits. For datasets lacking predefined splits, we partition the data into training and testing sets using an 80:20 ratio.

To estimate error bars, we employ distinct strategies based on the dataset's split configuration. For datasets without predefined splits, we perform random splits at an 80:20 training-to-testing ratio and conduct a minimum of ten independent experiments to compute the error bars. For datasets with predefined splits, we generate error bars by bootstrapping the test set (maintaining the original test set size) and performing at least ten repeated experiments.

\begin{table*}[!htbp]
	\caption{Hyperparameters for LHT are specified, and '–' indicates that LH forests are not used.}
	\label{h2}
	\centering
	\begin{center}
				\begin{tabular}{@{}l@{\hspace{8pt}}c@{\hspace{10pt}}c@{\hspace{10pt}}c@{\hspace{10pt}}c@{}}
					\toprule
					Dataset &$\gamma$, \text{min\_samples} &  $\beta^{\prime}(\beta)$ & Tree Num./Class &Forest Rate\\
					\midrule
                    Protein &   80& 0&100 & 60\%\\
SatImages &  5& 0&30 & 80\%\\
Segment &   2& 0& -& -\\
Pendigits &  2& 0&20 &80\% \\
Connect-4 &   7& 0& 40& 60\%\\
SensIT &  6 &0 & 100&60\% \\
Letter &  2 & 0.8&50 & 80\%\\
Balance Scale & 3& 0& -& -\\
Blood Trans. & 5& 0&150&30\%\\
Acute Inflam. 1 &   2&0 &- & -\\
Acute Inflam. 2 &  2&0 &- & -\\
Car Evaluation &  3 &0 & 100& 80\%\\
Breast Cancer &  2 & 0& 50&80\% \\
Avila Bible &  5&0 & 10& 80\% \\
Wine-Red &  2 &0 & -& -\\
Wine-White &   2 &0 & -& -\\
Dry Bean & 2 &0 & -& - \\
Climate Crashes &  2& 0&- & -\\
Conn. Sonar & 2&0 & 10& 80\%\\
Optical Recog. & 4& 0& 10& 80\% \\
					Wine& 2 & 0.25 & -&-\\
					Seeds & 2&$0.3$&10&100\% \\
					WDBC & 2 & 0 & -&-\\
					Banknote & 2 & 0.8 &2&100\%\\
					Rice & 3 & 0 &-&-\\
					Spambase & 4 & 0 &25&80\%\\
					EEG & 2 & 0.3 &25&100\% \\
					MAGIC & 3 &   0 &50&100\%\\
					SKIN & 2 &     0&50&80\%\\
                    Mnist & 6 &     0.8&20&100\%\\
					\bottomrule
				\end{tabular}
	\end{center}
\end{table*}

\begin{table}[!htbp]
    \caption{Hyperparameters for RF, XGBoost, CatBoost, and LightGBM.}
    \label{hyperparametre}
    \centering
    \begin{center}
        \scriptsize
                \begin{tabular}{@{}l@{\hspace{13pt}}l@{\hspace{10pt}}c@{\hspace{10pt}}c@{\hspace{10pt}}c@{}}
                    \toprule
                    Dataset & Method &  Max Depth & Learning Rate & Tree Num. \\
                    \midrule
                    \multirow{4}{*}{Protein}&  RF&  15 &-&150\\
                       &  XGBoost&  5  &0.1&200\\
                       &  CatBoost&  15 &0.1&100\\
                       &  LightGBM&  15 &0.1&37 \\
                    \midrule
                    \multirow{4}{*}{SatImages}&  RF&  15 &-&50\\
                       &  XGBoost&   5 &0.1&130\\
                       &  CatBoost&  7 &0.1&385\\
                       &  LightGBM&  15 &0.1& 72\\
                    \midrule
                    \multirow{4}{*}{Segment}&  RF&   15&-&100\\
                       &  XGBoost&   10 &0.1&200\\
                       &  CatBoost&  5 &0.1&200\\
                       &  LightGBM&   5&0.1&37 \\
                    \midrule
                    \multirow{4}{*}{Pendigits}&  RF&  15 &-&150\\
                       &  XGBoost&  5  &0.1&300\\
                       &  CatBoost&  10 &0.1&200\\
                       &  LightGBM&  5&0.1&106 \\
                    \midrule
                    \multirow{4}{*}{Connect-4}&  RF& 15 &-&150\\
                       &  XGBoost&  10&0.1&200\\
                       &  CatBoost&  10&0.1&200\\
                       &  LightGBM&  10&0.1&110 \\
                    \midrule
                    \multirow{4}{*}{MNIST}&  RF&  15&-&150\\
                       &  XGBoost& 15 &0.1&200\\
                       &  CatBoost&  10&0.1&200\\
                       &  LightGBM& 10 &0.1&138 \\
                    \midrule
                    \multirow{4}{*}{SensIT}&  RF&15 &-&100\\
                       &  XGBoost&  10&0.1&200\\
                       &  CatBoost& 10 &0.1&200\\
                       &  LightGBM& 10 &0.05&200 \\
                    \midrule
                    \multirow{4}{*}{Letter}&  RF& 15 &-&150\\
                       &  XGBoost&  10&0.1&200\\
                       &  CatBoost&  10&0.1&200\\
                       &  LightGBM& 10&0.1&100 \\
                    \midrule
                    \multirow{4}{*}{Seeds} &  RF&  6 & - &20\\
                       &  XGBoost&  7 &0.1&50\\
                       &  CatBoost&  7 &0.1 &70\\
                       &  LightGBM&  6& 0.1 &50 \\
                    \midrule
                    \multirow{4}{*}{WDBC} &  RF&  6 &-&20\\
                       &  XGBoost&  6  &0.1&50\\
                       &  CatBoost&  8 &0.05&100\\
                       &  LightGBM&  7 &0.05&100 \\
                    \midrule
                    \multirow{4}{*}{ Banknote} &  RF& 6 &-&25\\
                       &  XGBoost&  6&0.1&50\\
                       &  CatBoost&  8 &0.1&70\\
                       &  LightGBM&  6  &0.1&50\\
                    \midrule
                    \multirow{4}{*}{Rice} &  RF&  7 &-&20\\
                       &  XGBoost&  6 &0.1&60\\
                       &  CatBoost& 9 &0.1&80\\
                       &  LightGBM&6 &0.1&50 \\
                    \midrule
                    \multirow{4}{*}{Spambase} &  RF&  10&-&30\\
                       &  XGBoost&  9 &0.1&50\\
                       &  CatBoost&  8 &0.1&60\\
                       &  LightGBM&9  &0.1&50\\
                    \midrule
                    \multirow{4}{*}{EEG} &  RF& 15 &-&50\\
                      &  XGBoost&  10&0.1&150\\
                       &  CatBoost&  15 &0.1&150\\
                       &  LightGBM&  15&0.1&200\\
                    \midrule
                    \multirow{4}{*}{MAGIC} &  RF &  6&-&50\\
                       &  XGBoost & 10&0.1&50\\
                       &  CatBoost & 10 &0.1&100\\
                       &  LightGBM & 10&0.1&50\\
                    \midrule
                    \multirow{4}{*}{SKIN} &  RF&  10&-&100\\
                       &  XGBoost& 6 &0.1& 100\\
                       &  CatBoost& 10&0.1&100\\
                       &  LightGBM&  6&0.1&100\\
                    \bottomrule
                \end{tabular}
    \end{center}
\end{table}

\clearpage

\newpage

\end{document}